\newtheorem{definition}{Definition}
\newtheorem{theorem}{Theorem}
\newtheorem{lemma}{Lemma}
\newtheorem{remark}{Remark}
\begin{document}
\title{Generative Diffusion-based Contract Design for Efficient AI Twins Migration in Vehicular Embodied AI Networks}

\author{Yue Zhong, Jiawen Kang*, Jinbo Wen, Dongdong Ye, Jiangtian Nie, Dusit Niyato,~\IEEEmembership{Fellow,~IEEE,} \\ Xiaozheng Gao, Shengli Xie,~\IEEEmembership{Fellow,~IEEE}

\thanks{ 
Y. Zhong, J. Kang, D. Ye, and S. Xie are with the School of Automation, Guangdong University of Technology, Guangzhou 510006, China (e-mail: 2112404106@mail2.gdut.edu.cn; kavinkang@gdut.edu.cn; dongdongye8@163.com; shlxie@gdut.edu.cn). 

J. Wen is with the School of Computer Science and Technology, Nanjing University of Aeronautics and Astronautics, Nanjing 210016, China (e-mail: jinbo1608@163.com). 


J. Nie and D. Niyato are with the School of Computer Science and Engineering, Nanyang Technological University, Singapore (e-mail: jnie001@e.ntu.edu.sg; DNIYATO@ntu.edu.sg). 

X. Gao is with the School of Information and  Electronics, Beijing Institute of Technology, Beijing 100081, China (e-mail: gaoxiaozheng@bit.edu.cn).

(\textit{*Corresponding author: Jiawen Kang})



	} 
}

\maketitle

\begin{abstract}
Embodied Artificial Intelligence (AI) is a rapidly advancing field that bridges the gap between cyberspace and physical space, enabling a wide range of applications. This evolution has led to the development of the \underline{\textbf{V}}ehicular \underline{\textbf{E}}mbodied \underline{\textbf{A}}I \underline{\textbf{NET}}work (VEANET), where advanced AI capabilities are integrated into vehicular systems to enhance autonomous operations and decision-making. Embodied agents, such as Autonomous Vehicles (AVs), are autonomous entities that can perceive their environment and take actions to achieve specific goals, actively interacting with the physical world. Embodied twins are digital models of these embodied agents, with various embodied AI twins for intelligent applications in cyberspace. In VEANET, embodied AI twins act as in-vehicle AI assistants to perform diverse tasks supporting autonomous driving using generative AI models. Due to limited computational resources of AVs, these AVs often offload computationally intensive tasks, such as constructing and updating embodied AI twins, to nearby RoadSide Units (RSUs). However, since the rapid mobility of AVs and the limited provision coverage of a single RSU, embodied AI twins require dynamic migrations from current RSU to other RSUs in real-time, resulting in the challenge of selecting suitable RSUs for efficient embodied AI twins migrations. Given information asymmetry, AVs cannot know the detailed information of RSUs. To this end, in this paper, we construct a multi-dimensional contract theoretical model between AVs and alternative RSUs. Considering that AVs may exhibit irrational behavior, we utilize prospect theory instead of expected utility theory to model the actual utilities of AVs. Finally, we employ a generative diffusion model-based algorithm to identify the optimal contract designs, thus enhancing the efficiency of embodied AI twins migrations. Compared with traditional deep reinforcement learning algorithms, numerical results demonstrate the effectiveness of the proposed scheme.
\end{abstract} 

\begin{IEEEkeywords}
Vehicular embodied AI, multi-dimensional contract theory, generative diffusion model, prospect theory.
\end{IEEEkeywords}
\IEEEpeerreviewmaketitle

\section{Introduction}\label{Intro}
Embodied Artificial Intelligence (AI) refers to autonomous systems or robots that demonstrate intelligent behaviors within the physical environment by interacting with their surroundings through their bodies, finding applications across various fields \cite{duan2022survey}. The advancement of embodied AI in vehicular systems has led to the development of \underline{\textbf{V}}ehicular \underline{\textbf{E}}mbodied \underline{\textbf{A}}I \underline{\textbf{NET}}works (VEANETs), where vehicles integrate sensory input and motor capabilities to achieve real-time contextual awareness and adaptive decision-making \cite{cunneen2019autonomous}. Embodied agents, including Autonomous Vehicles (AVs) within VEANETs, actively perceive and interact with both virtual and physical environments, enabling them to understand human intentions, decompose complex tasks, and interact effectively with their surroundings \cite{liu2024aligning}. In VEANETs, embodied twins and AI twins have been proposed to enhance the intelligence of networks by integrating Digital Twins (DTs) into embodied AI systems. With an embodied world model serving as the ``brain" of agents, embodied twins facilitate the transfer of skills from virtual to real-world scenarios \cite{wu2024embodied}. Similar to the concept of DTs \cite{10415196}, embodied twins refer to digital models created through real-time data analytics and simulation, representing the complete life cycle of embodied agents in the virtual environment, encompassing multiple embodied AI twins. Specifically, vehicular embodied AI twins serve as in-vehicle AI assistants performing various tasks in AVs, and require constant updates to ensure real-time synchronization between physical and virtual spaces \cite{xu2023epvisa}. These software entities autonomously perform functions within their transportation environment, enabling independent cognition, decision-making and action without drivers, effectively simulating real-world decision-making processes. Depending on diverse demands, AVs equipped with embodied AI twins can offer various services to passengers, such as Augmented Reality (AR) navigation and Intelligent Cruise Control (ICC) \cite{greguric2024impact}, thereby providing an interactive and immersive experience for users within the vehicle. 


Considering the limited resources of an AV, these intensive computational tasks of embodied AI twins need to be offloaded to edge servers in the nearby RoadSide Units (RSUs) with more communicational and computing resources \cite{liang2019efficient}. However, as the AV moves, the AV may leave the current RSU with limited provision coverage. Thus it is hard to guarantee the continuity of in-vehicle services in the AV when its embodied AI twins are still in the current RSU \cite{chen2023multi}. Therefore, the embodied AI twins must undergo real-time migration from the current RSU to a new RSU to guarantee seamless delivery of in-vehicle services to the AV. This necessitates the development of an incentive framework aimed at encouraging the maximal participation of RSUs in provisioning required resources for embodied AI twins.

Given the uncertainty in interactions between RSUs and AVs, AVs may exhibit irrational behavior. Managers often rely on cognitive biases such as regret aversion, confirmation bias, and recency bias, when making complex decisions under time constraints and incomplete information, leading to suboptimal choices \cite{aren2021biases}. For instance, AVs might favor RSUs that previously provided favorable data, even if less reliable, neglecting more accurate RSUs and resulting in inefficient traffic management and potential safety risks. Consequently, the application of Expected Utility Theory (EUT) to establish the utility of AVs is not considered rational. In light of this, the authors in \cite{eec14168-5714-3ca8-b073-d038266f2734} introduced a novel model of risk attitudes known as Prospect Theory (PT). This model effectively captures empirical evidence of risk-taking behavior, including observed deviations from EUT. There have been studies integrating PT into the construction of utility functions in contract theory to better capture the subjective utility of users in the model \cite{10254627,huang2021efficient,9739801}. As a result, leveraging PT allows us to incorporate the subjective utility of AVs, resulting in a more accurate and meaningful model. 

To tackle the aforementioned challenges, we consider an incentive mechanism utilizing PT for efficient embodied AI twins migration. In this regard, we introduce a contract model designed to incentivize RSUs to provide resources for service provision of embodied AI twins. Recognizing the inherent uncertainty experienced by AVs in uncertain environments, we formulate a novel contract model by incorporating PT. The new contract model facilitates the establishment of a subjective utility function for AVs, which considers their preferences and decision-making processes. Moreover, Generative Diffusion Models (GDMs) present a promising tool for resolving optimization. Therefore, we employ a GDM-based scheme to determine optimal contracts. The main contributions of this paper are summarized as follows:

\begin{itemize}
    \item \textit{To the best of our knowledge, this is the first work to propose the concept of ``embodied twins" and ``embodied AI twins".} Embodied twins are digital counterparts of embodied agents within virtual environments, while embodied AI twins are integral elements of these embodied twins, which refer to replicas created by AI algorithms to execute diverse sub-tasks or functions of the embodied agents. AVs act as embodied agents within VEANETs, with their embodied AI twins acting as in-vehicle AI assistants that offer diverse services to passengers.
    \item In VEANETs, considering that AVs lack specific information about the resources and capabilities of RSUs, we apply the contract theory to address this information asymmetry. We develop a multi-dimensional contract model where the AV acts as the contract designer and RSUs act as the contract selectors. To better measure the perception capability of the embodied AI twins, we incorporate virtual immersion metrics of users within AVs into the utility function of AVs.
    \item We integrate multi-dimensional contract theory with PT to design an incentive mechanism that effectively encourages AVs to participate in embodied AI twins migration. By utilizing the framing effect in PT, we capture the risk-aware behavior of AVs, thereby enhancing the acceptability of the incentive mechanism in practical applications. Subsequently, we analyze the solution for multi-dimensional contract design.
    \item We employ a GDM-based algorithm to find the optimal contract designs, leveraging the  GDM to address the high dimensionality and complexity of the formulated problem. Through subsequent numerical analysis, we demonstrate that the proposed GDM-based scheme outperforms the traditional Deep Reinforcement Learning (DRL)-based scheme in terms of efficiency.
\end{itemize}

The rest of this paper is organized as follows. In Section \ref{RW}, we review the related work. In Section \ref{framework}, we propose the overall framework of this paper and introduce the preliminaries of the PT and GDM. In Section \ref{Problem}, we present the problem formulation, propose the contract model, demonstrate the contract feasibility, and integrate the PT into the incentive mechanism. In Section \ref{Optimial_Contract}, we propose a GDM-based algorithm to find the optimal contract designs. Section \ref{Results} shows numerical results about our proposed model. Finally, Section \ref{Conclusion} summarizes the paper.

\section{Related Work}\label{RW}

\subsection{Vehicular Embodied AI Networks}
\begin{figure}[t]
\centering
\includegraphics[width=0.45\textwidth]{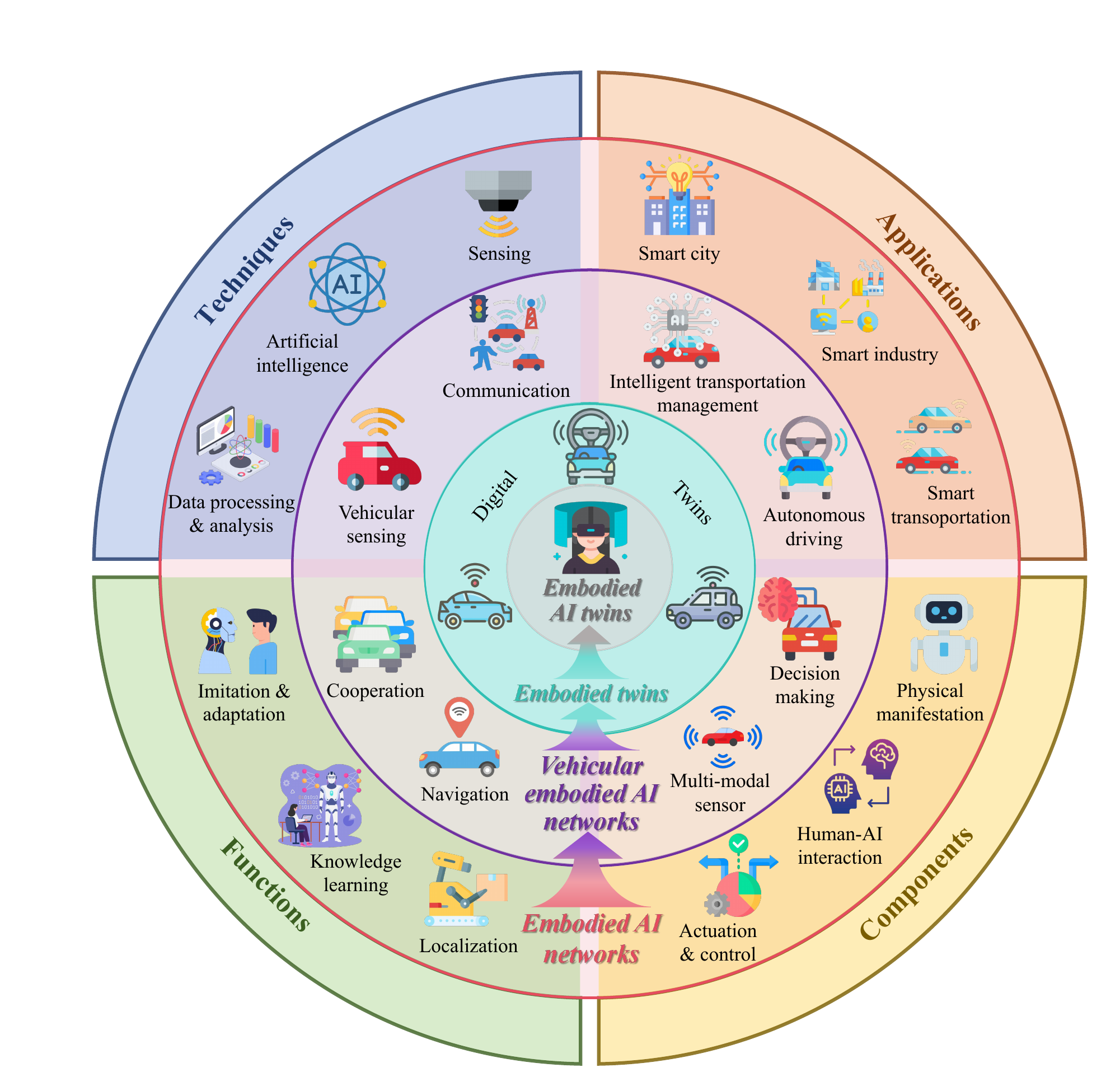}
\captionsetup{font=footnotesize}
\caption{The illustration of the techniques, components, functions, and applications of Embodied AI networks and VEAINETs.}
\label{VEANET}
\end{figure}
The concept of ``Embodied AI” was first derived by Alan Turing from his paper named ``Computing Machinery and Intelligence", published in 1950, which introduced the idea now widely known as the Turing Test \cite{turing2009computing}. Turing posed the question of whether machines can think, exploring the possibility of creating agents that exhibit intelligence not only in solving abstract problems in cyberspace but also in performing complex tasks in the physical world. The rapid advancement of embodied AI has expanded its applications across various fields, as shown in Fig. \ref{VEANET}, garnering significant attention from the research community \cite{schmalzried2024role}. The integration of embodied AI with vehicular networks has led to the emergence of a new paradigm known as VEANETs, where AVs play a pivotal role \cite{cunneen2019autonomous}. By leveraging multi-modal perception and coordinated actions, VEANETs enhance vehicular intelligence, allowing for autonomous navigation and interaction within dynamic, unpredictable environments.


Embodied agents are at the core of embodied AI, functioning as intelligent entities that interact with the physical world. The development of Generative AI (GAI) models, e.g., Large Language Models (LLMs), Vision Language Models (VLMs), and Vision Language Action (VLA) models, has significantly enhanced the perception, interaction, and planning capabilities of foundational models \cite{londono2024fairness}. These developments have enabled the creation of versatile embodied agents capable of seamless interaction in both virtual and physical environments, making them an ideal platform for deploying Multi-modal Large Models (MLMs) \cite{huang2022inner}. Embodied agents are equipped with multi-modal sensors, e.g., cameras, microphones and tactile sensors, enabling them to perceive and interact with their surroundings in real-time \cite{wang2024multimodal}. Additionally, they often feature actuators, e.g., robotic arms, wheels or legs, which allow them to physically engage with objects and navigate their environment effectively. Their cognitive abilities enable them to comprehend and operate in complex real-world environments, making real-time decisions in dynamic and unpredictable situations without constant human oversight \cite{liu2024aligning}. AVs equipped with AI-powered sensors and advanced algorithms, exemplify embodied agents within VEANETs, achieving human-like perception and decision-making capabilities \cite{bathla2022autonomous}.


DTs are virtual counterparts that faithfully represent the complete life cycle of physical objects within a virtual environment \cite{10415196}. Similarly, embodied twins are digital representations of embodied agents, with embodied AI twins specifically being digital replicas created using AI algorithms to perform sub-tasks or functions of these agents. These AI twins can extend the capabilities they have developed in virtual environments into the real world. In VEANETs, embodied twins represent AVs as digital models within the digital environment, with embodied AI twins serving as in-vehicle virtual assistants, supporting AI-driven services like AR navigation and ICC. Since different in-vehicle services require distinct resource allocations, a single AV offering multiple services may need varied resource distributions from RSUs \cite{wen2023task}. The dynamic vehicular physical world contains essential information and attributes of tangible entities, necessitating continuous updates to the real-world characteristics of embodied AI twins in the virtual realm \cite{du2023yolo}. The interplay between vehicular movement and limited RSU coverage poses challenges, requiring real-time migration of embodied AI twins between RSUs \cite{zhong2023blockchain}. This process entails transitioning from RSUs currently providing resources to RSUs on the verge of assuming coverage responsibility.

\subsection{Incentive Mechanisms for Twins Migration}
Establishing the virtual space and providing in-vehicle services entail substantial resource consumption, particularly in terms of computing resources required to handle the intensive data, extensive storage resources, and robust network resources necessary to maintain ultra-high-speed and low-latency connections \cite{10158923}. Therefore, it is imperative to tackle the challenges associated with resource allocation and devise an incentive mechanism that encourages virtual service providers to offer their resources \cite{9838422, 9838736, 10302973}. In \cite{wen2023task}, the authors introduced an incentive mechanism for migrating Vehicle Twins (VTs) within the virtual space, addressing the challenge of ensuring uninterrupted services despite limited RSU coverage and vehicle mobility. They proposed an Age of Migration Task (AoMT) metric to measure task freshness and an AoMT-based contract model to incentivize RSUs to contribute sufficient bandwidth resources. In \cite{zhong2023blockchain}, a blockchain-assisted game approach framework was introduced for ensuring reliable VT migration within vehicular metaverses. The authors devised a single-leader multi-followers Stackelberg game involving a chosen coalition of RSU and Vehicular Metaverse Users (VMUs) to incentivize VMUs to engage in VT migrations. In \cite{10302973}, the authors introduced a learning-based incentive mechanism, i.e., the Stackelberg model, for VT migration in vehicular metaverses, addressing the challenge of ensuring seamless experiences for users within vehicles amidst limited RSU coverage and mobility. 

Recent research has begun addressing the resource optimization challenges related to digital twins migration caused by vehicle movement in vehicular metaverses, along with the development of incentive mechanisms. However, these studies remain relatively narrow in scope and do not extend to twins migration within VEANETs. In \cite{isprs2021}, the authors explored the intersection of environmental sensing, immersive technologies, and embodied cognition to lay the groundwork for embodied digital twins. They proposed leveraging theoretical foundations of embodied cognition to develop research frameworks for advancing the concept, emphasizing the conversion of environmental data into immersive experiences. Although the authors in this paper proposed the concept of ``embodied digital twins", they did not consider the problem of twins migration. The authors in \cite{liu2024aligning} examined nearly 400 papers, initially presenting a selection of prominent embodied robots and embodied simulation platforms. Subsequently, it delved into discussions on embodied perception, embodied interaction, embodied intelligent bodies, and virtual-to-real migration. However, it did not cover pertinent literature on the migration of twins in VEANETs. Consequently, developing incentive mechanisms for twins migration in VEANETs is crucial for advancing this field.

\section{System Model}\label{framework}
In this section, we introduce the incentive mechanism framework proposed in this paper, as well as the preliminary concepts of PT and GDMs.
\begin{figure*}[t]
\centering
\includegraphics[width=0.95\textwidth]{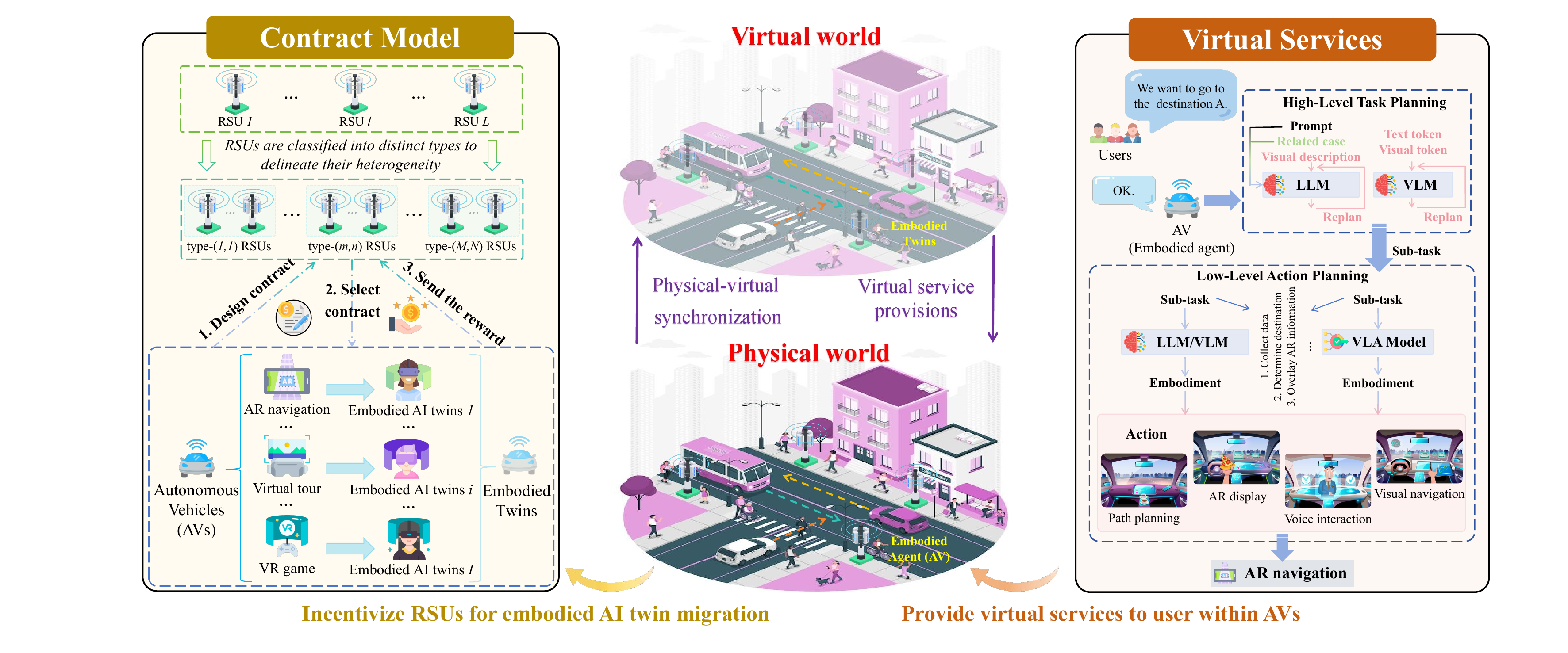}
\captionsetup{font=footnotesize}
\caption{The left part is the multi-dimensional contract-based embodied AI twins migration framework in VEANETs. The right part is the schematic diagram of embodied agents completing tasks, consisting of a high-level task planning module and a low-level action planning module.}
\label{system_fig}
\end{figure*}
\subsection{Multi-dimensional Contract-based Embodied AI twins migration Framework}

AVs continuously generate and execute computationally intensive embodied AI twins tasks to ensure this synchronization within the virtual space \cite{xu2023generative}. However, due to limitations in local resources, AVs may be unable to handle these tasks and update the tasks. To address this, AVs delegate the execution of computationally intensive embodied AI twins tasks to RSUs equipped with robust computing and communication infrastructure \cite{alkhoori2024latency}. By offloading these tasks to RSUs, AVs ensure real-time execution and seamless synchronization with the virtual space. Additionally, RSUs can utilize information from the embodied AI twins of AVs to assist in service provision for users within the AVs. Due to the limited service coverage of RSUs and the mobility of AVs, the embodied AI twins need to be migrated from the current RSUs to the next RSUs \cite{zhong2023blockchain}. We consider hotspot areas, e.g., intersections and areas near commercial streets, there are multiple RSUs in the area, and AVs need to decide the target RSUs to migrate their embodied AI twins to based on the diverse requirements of in-vehicle services. In the interaction between RSUs and AVs, AVs compensate RSUs for services rendered by paying rewards. In contrast, RSUs fulfill their role by provisioning the necessary resources for executing embodied AI twins tasks. Therefore, we introduce an incentive mechanism framework between RSUs and AVs, incentivizing RSUs to offer resources for embodied AI twins migration. The multi-dimensional contract-based embodied AI twins migration framework and the steps for AVs to perform tasks are shown in Fig. \ref{system_fig}, and the detailed information is described as follows.

\textit{\textbf{Step 1: Send embodied AI twins migration requests to the current RSUs}}: As shown in the middle part of Fig. \ref{system_fig}, when AVs are in motion on the road, continuous in-vehicle services cannot be provided to users within the AVs due to the limited service coverage of RSUs. To ensure a seamless immersive experience with AVs, the embodied AI twins should be migrated from current RSUs to other RSUs \cite{zhong2023blockchain, wen2023task}. To initiate this migration process, AVs send embodied AI twins migration requests to current RSUs. Subsequently, RSUs broadcast their requests to surrounding RSUs, facilitating the seamless transfer of embodied AI twins and the uninterrupted delivery of in-vehicle services to AVs.

\textit{\textbf{Step 2: Construct a multi-dimensional contract model for embodied AI twins migration between AVs and RSUs}}: As shown in the left part of Fig. \ref{system_fig}, to address the information asymmetry between AVs and RSUs and incentivize RSUs to allocate computing and bandwidth resources for embodied AI twins migration, a contract model is developed, which contains three steps. 1) AVs design multi-dimensional contracts for all types of RSUs, i.e., AVs serve as contract designers, determining the terms of contracts tailored for individual RSUs; 2) RSUs select the contract designed for themselves, i.e., RSUs act as contract choosers, selecting the optimal contract offered by AVs; 3) AVs send the reward to RSUs, i.e., RSUs provide resources for AVs based on the selected contracts and receive the corresponding rewards. This contractual arrangement ensures that both parties are aligned in their objectives, promoting cooperation and resource allocation efficiency during the transaction process \cite{10302973}. Furthermore, from the left part, we observe that the embodied twins represent the virtual model of AVs, with each embodied twins containing multiple embodied AI twins. We assume that there are $I$ embodied AI twins within an AV. Denote the embodied twins as $E_T$ and the $i$-th embodied AI twins as $E_{AI_T}^i$ for $0\le i\le I$. Since each embodied twins contains several embodied AI twins, we can express the embodied twins as $E_T=\{E_{AI_T}^1,\cdots, E_{AI_T}^i,\cdots, E_{AI_T}^I\}$.

\textit{\textbf{Step 3: Receive resources from target RSUs and provide in-vehicle services to users}}: Once RSUs select the optimal contract, they allocate the designated resources to the embodied AI twins task and receive the corresponding reward. The right side of Fig. \ref{system_fig} illustrates how embodied agents (i.e., AVs), undertake tasks like AR navigation. To accomplish these tasks, embodied agents typically follow these processes \cite{liu2024aligning}: 1) High-level embodied task planning: This process involves breaking down abstract and intricate tasks into specific sub-tasks; 2) Low-level embodied action planning: The agents incrementally execute these sub-tasks by utilizing embodied perception and interaction models. LLMs and VLMs play crucial roles in facilitating embodied task planning. Embodied agents can approach action planning through two strategies: using pre-trained perception and intervention models as tools to systematically complete sub-tasks or by directly deriving action planning from the capabilities of the VLA model \cite{liu2024aligning}. Upon completing the action planning, the embodied AI twins migration is completed, and the embodied AI twins can continue to request resources from RSUs to ensure task execution, enabling AVs to provide seamless in-vehicle services to users.

\subsection{Prospect Theory}
In 1979, two prominent Israeli psychologists, Daniel Kahneman and Amos Tversky, made a significant contribution to the field of decision-making under risk with their paper titled ``Prospect Theory: An Analysis of Decision-Making under Risk",  published in the journal Econometrics \cite{eec14168-5714-3ca8-b073-d038266f2734}. The proposed framework provides valuable insights into the intricacies of decision-making under uncertainty and risk, thus highlighting the limitations of traditional utility-based theories (e.g., EUT) and providing a comprehensive analysis of decision-making in uncertain scenarios. There are two main differences between PT and EUT. 
\begin{enumerate}[1.]
    \item PT integrates subjective probabilities to ascertain the weighting allocated to each potential outcome. Subjective probability is derived from objective probability \cite{9739801}. 
    \item Decision-makers employ reference points based on specific objectives to classify outcome returns as either gains or losses in PT. Falling short of this goal is perceived as a loss while exceeding it is deemed a gain \cite{10254627}. 
\end{enumerate}

We derive the utility function form of PT in the following. We consider a system with $k$ AVs, denoted by the set $\mathcal{K}=\{1,\cdots,k,\cdots,K\}$. The utility function for all AVs, based on EUT, is defined as
\begin{equation}
    U^{EUT}=\sum_{k=1}^K P_k U_{k}^{EUT},
\end{equation} 
where $P_k$ represents the objective probability, and $U_{k}^{EUT}$ denotes the utility of the AV $k$. In uncertain and risky environments, AVs may exhibit irrational behavior. To address this, we can leverage the fundamental principles of PT to construct a utility function that captures their decision-making process more effectively. The utility function based on PT can be expressed as \cite{huang2021efficient}
\begin{equation}
    U^{PT}=\sum_{k=1}^K H(P_k) U_{k}^{PT},  
\end{equation}
where $H(P_k)=\exp(-(-\log(P_k))^\alpha)$ represents the inverse S-shape probability weighting function applied to the objective probability $P_{m,n}$. This weighting function introduces a psychological bias, characterized by an underestimation of high-probability events and an overestimation of low-probability events \cite{9739801}. The rational coefficient $\alpha$ is employed to quantify the extent of distortion in the subjective evaluation of objective probabilities, thereby influencing the overall shape of the weighting function \cite{eec14168-5714-3ca8-b073-d038266f2734}. Consequently, $U_{k}^{PT}$ can be calculated as
\begin{equation} \label{PT}
U_{k}^{PT}= \left\{ \begin{aligned}
(U_{k}^{EUT}-U_{k}^{ref})^{\eta^{+}},\: U_{k}^{EUT}\geq U_{k}^{ref},\\
- \nu (U_{k}^{ref}-U_{k}^{EUT})^{\eta^{-}},\: U_{k}^{EUT}<U_{k}^{ref},\\
\end{aligned} \right.
\end{equation}
where $\eta^{+}, \eta^{-}\in (0,1]$ serve as weighting factors that capture the distortion of gains and losses, respectively. $\nu\geq 0$  reflects the level of loss aversion. The reference point $U_{k}^{ref}$ is introduced to classify the utility $U_{k}^{EUT}$ as either a gain or a loss, further enhancing the applicability of the PT framework \cite{10254627}.

\subsection{Generative Diffusion Models}
The advent of GAI presents transformative potential extending beyond conventional AI paradigms. Unlike conventional AI frameworks predominantly oriented towards the analysis or classification of pre-existing data, GAI possesses the capability to generate novel datasets encompassing various modalities such as textual, visual, auditory, and synthetic temporal sequences, among others \cite{9903869}. GAI encompasses a diverse array of models and methodologies, e.g., Transformer, Generative Adversarial Networks (GANs), and GDMs, these models and methodologies possess distinct advantages and applications within the realm of AI \cite{du2023deep}. Their contributions to the progression of AI exhibit variations, with GDMs standing out as particularly influential in this context, primarily owing to their distinctive methodology for data generation and their aptitude for modeling intricate data distributions \cite{10419041}. GDMs employ a progressive forward diffusion process based on the initial input data, gradually introducing Gaussian noise. Then, GDMs employ a reverse diffusion process through a denoising network, which iteratively approximates real samples represented as $x \sim q(x)$ through a series of estimation steps, and $q(x)$ represents the underlying data distribution \cite{du2023deep,yang2023diffusion}. Subsequently, the denoising network undergoes training to reverse the noise process and restore both the data and its content, thereby facilitating novel data generation. The following describes the forward and reverse diffusion process in further detail:

\subsubsection{Forward diffusion process} 
Considering a given data distribution $x_0 \sim q(x_0)$, the forward process in GDMs can be accurately represented as a Markov process comprising $T$ steps. Gaussian noise is applied to the initial sample $x_0$ in the forward diffusion process, resulting in the generation of a series of samples $\{x_1,x_2,\cdots,x_T\}$ \cite{10172151}. This progression is governed by the transition kernel $q(x_t|x_{t-1})$, which captures the dynamics of the system \cite{ho2020denoising}. By utilizing the chain rule of probability and leveraging the Markov property, the joint distribution of $\{x_1,x_2,\cdots,x_T\}$ conditioned on $x_0$ can be decomposed as $\prod_{t=1}^T q(x_t|x_{t-1})$, i.e.,
\begin{equation}
    \begin{split}
        q(x_1,&x_2,\cdots,x_T|x_0)=\prod_{t=1}^T q(x_t|x_{t-1}),\\
        q(x_t|x_{t-1})&=\mathcal{N}(x_t;\boldsymbol\mu_t=\sqrt{1-\iota_t}x_{t-1},\boldsymbol\Sigma_t=\iota_t\textbf{I}),
    \end{split}
\end{equation}
where $\boldsymbol\mu_t$ and $\boldsymbol{\Sigma}_t$ denote the mean and variance, respectively, of the normal distribution at step $t$. $\textbf{I}$ represents that each dimension has the same standard deviation $\iota_t$ and is the identity matrix. To simplify the expression, we define $\lambda_t:=1-\iota_t$ and $\hat{\lambda}_t:=\prod_{i=0}^t\iota_i$. 
Given the input content $x_0$, sampling the Gaussian vector $\boldsymbol{\epsilon} \sim \mathcal N (\textbf{0}, \textbf{I})$, $x_t$ can be obtained by \cite{du2023deep}
\begin{equation}
    x_t=\sqrt{\hat{\lambda}_t}x_0+\sqrt{(1-\hat\lambda_t)}\boldsymbol{\epsilon}_0,
\end{equation}
Therefore, $x_t$ can be obtained by the following distribution
\begin{equation}
    x_t\sim q(x_t|x_{t-1})=\mathcal{N}(x_t;\sqrt{\hat{\lambda}_t}x_0,(1-\lambda_t)\textbf{I}).
\end{equation}
    
\subsubsection{Reverse diffusion process} 
Based on the inverse distribution $q(x_{t-1}|x_t)$, it becomes feasible to sample $x_t$ from the standard normal distribution $\mathcal{N}(\textbf{0},\textbf{I})$ using a reverse process. A crucial factor contributing to the effectiveness of this sampling process is the training of the reverse Markov chain to accurately replicate the time reversal of the forward Markov chain \cite{yang2023diffusion}. Nevertheless, accurately estimating the statistical properties of $q(x_{t-1}|x_t)$ necessitates intricate computations involving the data distribution, which poses a formidable challenge. To address this challenge, a parametric model $p_\theta$ can be employed to approximate the estimation of $q(x_{t-1}|x_t)$ as follows, which is given by \cite{ho2020denoising}
    \begin{equation}\label{p}
        p_\theta(x_{t-1}|x_t)=\mathcal{N}(x_{t-1};\boldsymbol\mu_\theta(x_t,t),\boldsymbol\Sigma_\theta(x_t,t)),
    \end{equation}
where $\theta$ represents the model parameters. Thus, the trajectory from $x_T$ to $x_0$ is expressed as \cite{du2023deep}
    \begin{equation}
        p_\theta(x_0,x_1,\cdots,x_T)=p_\theta(x_T)\prod_{t=1}^Tp_\theta(x_{t-1}|x_t).
    \end{equation}
Adding conditional information, i.e., $g$, during the denoising process, $p_\theta(x_{t-1}|x_t,g)$ can be modeled as a noise prediction model, and the covariance matrix and the mean can be expressed as
    \begin{equation}\label{sigma}
        \boldsymbol\Sigma_\theta(x_t,g,t)=\iota_t\textbf{I},
    \end{equation}
    \begin{equation}\label{mu}
        \boldsymbol\mu_\theta(x_t,g,t)=\frac{1}{\sqrt{\lambda_t}}\bigg(x_t-\frac{\iota_t}{\sqrt{1-\hat{\lambda}_t}}\boldsymbol\epsilon_\theta(x_t,g,t)\bigg).
    \end{equation}
Firstly, a sample $x^T \sim \mathcal N (\textbf{0}, \textbf{I})$ is drawn from the standard normal distribution. Subsequently, sampling from the reverse diffusion chain, parameterized by $\theta$, is performed as follows
    \begin{equation}\label{sample}
        x_{t-1}|x_t=\frac{x_t}{\sqrt{\lambda_t}}-\frac{\iota_t}{\sqrt{\lambda_t(1-\hat{\lambda}_t)}}\boldsymbol\epsilon_\theta(x_t,g,t)+\sqrt{\iota_t}\boldsymbol\epsilon.
    \end{equation}
By disregarding certain weight terms, the original loss function can be streamlined and simplified to \cite{ho2020denoising}
    \begin{equation}
        \mathcal{L}_t=\Bbb{E}_{t,x_0 \sim q(x_0),\boldsymbol\epsilon\sim \mathcal N (\textbf{0}, \textbf{I})}\big[\|\boldsymbol\epsilon-\boldsymbol\epsilon_\theta(\sqrt{\hat{\lambda}_t}x_0+\sqrt{1-\hat{\lambda}_t}\boldsymbol\epsilon,t)\|^2\big].
    \end{equation}
Reverse denoising is a fundamental component that reverses the forward denoising process through the learning of a transformation kernel, denoted as $p_\theta(x_{t-1},x_t)$, which is parameterized by a deep neural network \cite{du2023deep}. This kernel facilitates restoring the original data $x_0$ by effectively removing the introduced Gaussian noise.

\renewcommand{\arraystretch}{1.3}
\begin{table}[t]
\caption{Key Mathematical Notations}
\begin{tabular}{m{0.8cm}|m{6.8cm}} 
\toprule[1.5pt]
\hline
\multicolumn{1}{c|}{\textbf{Notation}}  & \multicolumn{1}{c}{\textbf{Definition}} \\ \hline
$\theta_m, \sigma_m$ &  Type-$m$ RSUs and type-$n$ RSUs, which are based on computation and bandwidth resources, respectively \\ \hline
$\phi_{m,n}$ & Type-$(\theta_m,\sigma_n)$ RSU, which based on computation and bandwidth resources \\ \hline
$R_{m,n}$ & Reward that AVs pay for the type-$(\theta_m,\sigma_n)$ RSUs \\ \hline
$b_{m,n}$ & Bandwidth resources that the type-$(\theta_m,\sigma_n)$ RSUs provides to AVs \\ \hline
$f_{m,n}$ & CPU frequency of the type-$(\theta_m,\sigma_n)$ RSUs provides computation resources to AVs \\ \hline
$p_{m,n}$ & Transmission power between the type-$(\theta_m,\sigma_n)$ RSUs and AVs \\ \hline
$g_{m,n}$ & Channel gain between the type-$(\theta_m,\sigma_n)$ RSUs and AVs \\ \hline
$N_0$ & Noisy spectral density between the type-$(\theta_m,\sigma_n)$ RSUs and AVs \\ \hline
$T_{th}$ & Threshold for rendering capacity of AVs \\ \hline
$\zeta_1, \zeta_2$ & Wights of bandwidth and computation affect the rendering capability, respectively \\ \hline
$D, S, v$ & Resolution, spectrum efficiency, and framerate of the HMD device of AVs, respectively \\ \hline
$\mu_{m,n}$ & Effective capacitance coefficients for computational chipsets with the type-$(\theta_m,\sigma_n)$ RSUs \\ \hline
$c_{m,n}$ & Latency of unit bandwidth transmitted unit distance between the type-$(\theta_m,\sigma_n)$ RSUs and AVs \\ \hline
$d_{m,n}$ & Distance between the type-$(\theta_m,\sigma_n)$ RSUs and AVs \\ \hline
$\psi_{m,n}$ & Bandwidth cost coefficient of the type-$(\theta_m,\sigma_n)$ RSUs \\ \hline
$\xi_{m,n}$ & Unit monetary cost of the computing energy consumption of the type-$(\theta_m,\sigma_n)$ RSUs \\ \hline
$\alpha, \beta$ & User-centric parameters reflect the sensitivity of AVs to immersion and latency, respectively \\ \hline
$\delta^+, \delta^-$ & Weighting factors capture the distortion of gains and losses, respectively \\ \hline
$\tau$ & Parameter reflects the level of loss aversion \\ \hline
\bottomrule[1.5pt]
\end{tabular}\label{notation}
\end{table}

\section{Problem Formulation}\label{Problem}
In this section, we introduce a multi-dimensional contract mechanism designed to motivate RSUs to offer bandwidth and computing resources to AVs. Initially, we define the utility functions of RSUs and AVs. Subsequently, we develop a contract theory model and validate its feasibility. Finally, considering the potential for AVs irrationally in uncertain environments, we propose incorporating PT into the framework of the proposed incentive mechanism. The main mathematical notations of this paper are shown in Table \ref{notation}.

\subsection{Utility Functions}
We consider a set of RSUs denoted as $\mathcal{L}=\{1,\cdots, l,\cdots, L\}$, where $L$ denotes the total number of RSUs, accompanied by one AV, where the AV has various embodied AI twins deployed in RSUs. Given the mobility of AVs in AVs and the limited service coverage of RSUs, the embodied AI twins of AVs necessitate real-time migration across RSUs. Consequently, AVs must request resources from RSUs to which their embodied AI twins relocate, with RSUs receiving rewards for providing such resources \cite{zhong2023blockchain}. In an ideal scenario, AVs would possess specific RSU information to make more informed decisions regarding RSU rewards. However, owing to information asymmetry, AVs lack insight into the private information of RSUs \cite{wen2023task}. To mitigate this, we employ contract theory between AVs and RSUs. To begin, we outline the utilities of RSUs and AVs. Some details are shown in Fig. \ref{contract_fig}.

\subsubsection{Utilities of RSUs}
To facilitate the migration of embodied AI twins for virtual service provision, RSUs must furnish embodied AI twins of AVs with the requisite resources. In return, RSUs receive rewards from AVs, albeit at the expense of energy consumption. Consequently, the utility function of RSU $l$ is defined as the difference between the reward paid by the AV to RSU $l$ and the energy consumption cost incurred by RSU $l$, expressed as $V_l = R_l - C_l$ \cite{8239591}. As RSUs furnish both bandwidth and computing resources for embodied AI twins migration, the energy consumption cost must encompass both resource types. The cost associated with bandwidth usage, which arises from the transmission of information, is known as the bandwidth energy cost or communication cost. This cost can be expressed as $C_l^{b}=\psi_l\big(\frac{b_l}{G_l}\big)^2$, where $\psi_l$ is the bandwidth cost coefficient, $b_l$ is the bandwidth allocated by the RSU $l$ to the AV, and $G_l$ represents the channel gain between the AV and RSU $l$ \cite{5628271}. As the execution of computationally intensive tasks by the embodied AI twins within RSU necessitates the utilization of computing resources \cite{10416899}, an additional expenditure in computing costs arises. The computing energy cost incurred by RSU $l$ can be expressed as $C_l^c=\xi_l\mu_l f_l^2$, where $\xi_l$ represents the unit monetary cost of computing energy consumption, and $f_l$ denotes the CPU frequency \cite{9838422}. Consequently, based on the expression $C_l=C_l^{b}+C_l^c$, the utility of the RSU $l$ is expressed as
\begin{equation}\label{type1}
    V_l=R_l-\psi_l\bigg(\frac{b_l}{G_l}\bigg)^2-\xi_l\mu_l f_l^2.
\end{equation}

We define $\theta_l=\frac {{G_l}^2}{\psi_l}$ and $\sigma_l=\frac 1{\mu_l\xi_l}$, where $\theta_l$ and $\sigma_l$ are related to the communication cost and computation cost, respectively. Therefore, Eq. (\ref{type1}) can be varied as
\begin{equation}\label{type2}
    V_l=R_l-\frac{b_l^2}{\theta_l} -\frac{f_l^2}{\sigma_l}.
\end{equation}

According to Eq. (\ref{type2}), RSUs are classified into distinct types to delineate their heterogeneity. Specifically, RSUs can be classified as a set $\Theta=\{\theta_m,1\le m \le M\}$, representing $M$ computation cost types, and a set $\Sigma=\{\sigma_n,1\le n\le N\}$, representing $N$ communication cost types. Consequently, $MN$ RSU types exist, with their distribution described by the joint probability mass function $Q_{m,n}$, and $\sum_{m=1}^M\sum_{n=1}^NQ_{m,n}=1$ \cite{9317806}. These RSU types are arranged in non-decreasing sequences for each dimension, i.e., $\theta_1<\cdots<\theta_m<\cdots<\theta_M$ and $\sigma_1<\cdots<\sigma_n<\cdots<\sigma_N$. RSUs are differentiated based on these two cost types. For simplicity, a RSU of computation cost type $m$ and communication cost type $n$ is denoted as type-$(\theta_m,\sigma_n)$. Subsequently, we omit the subscript $i$ and utilize the combination of bandwidth, CPU frequency and reward, i.e., $\{b_{m,n}, f_{m,n}, R_{m,n}\}$, to express the utility of the type-$(\theta_m,\sigma_n)$ RSUs as
\begin{equation}\label{Vmn}
    V_{m,n}=R_{m,n}-\frac{b_{m,n}^2}{\theta_m}-\frac{f_{m,n}^2}{\sigma_n}.
\end{equation}

\subsubsection{Utility of AVs}
After embodied AI twins are migrated to RSUs their service scope covers AVs, and the embodied AI twins will obtain resources to perform tasks, allowing AVs to obtain in-vehicle services \cite{zhong2023blockchain,10416899}. In \cite{10144339}, the authors introduced a new metric called ``Meta-Immersion" to measure the Quality of Experience (QoE) experienced by AVs in virtual services. In our paper, we also employ this virtual immersive metric to measure the satisfaction of AVs receiving in-vehicle services from RSUs. In addition, it takes time for RSUs to transmit service data to AVs, which may cause latency, resulting in a degradation of the service experience of AVs. Therefore, the utility function of AVs should consider the immersion metric, the latency, and the reward, i.e., 
\begin{equation}
    U=\sum_{m=1}^M\sum_{n=1}^N(\alpha M_{m,n}-\beta D_{m,n}-R_{m,n}),
\end{equation}
where $\alpha$ and $\beta$ are user-centric parameters that can reflect the sensitivity of the users within the AV to the immersion indicator and latency, respectively. $M_{m,n}$ represents the immersion metric of the users within the AV achieved from the type-$(\theta_m,\sigma_n)$ RSUs, and $D_{m,n}$ denotes the latency of the AV receiving in-vehicle services from the type-$(\theta_m,\sigma_n)$ RSUs. A viable mathematical expression for immersion metric can be derived by taking the connectivity coefficient multiplied by the logarithm of the stimulus intensity \cite{9999298}. We consider the downlink data rate and rendering capacity as the connectivity coefficient and the stimulus intensity \cite{9999298,9838736}, respectively. The downlink data rate from the type-$(\theta_m,\sigma_n)$ RSUs to the AV is expressed as
\begin{equation}
    r_{m,n}=b_{m,n}\ln\bigg(1+\frac{p_{m,n}|g_{m,n}|^2}{b_{m,n}N_0}\bigg),
\end{equation}
where $p_{m,n}$, $g_{m,n}$, and $N_0$ represent the transmission power, the channel gain, and the noisy spectral density between the type-$(\theta_m,\sigma_n)$ RSUs and AVs, respectively. AVs obtain immersive experiences in the physical world through Head-Mounted Display (HMD). The HMD device of users in AVs determines the rendering capability (in units of resolution $D$ and frame rate $v$) of the provided virtual service for users within the AV, which can be expressed as \cite{9838736}
\begin{equation}
    t_{m,n}=\ln\bigg(\frac{Dv\big(\zeta_1Sb_{m,n}+\zeta_2\mu_{m,n}f_{m,n}^2\big)}{T_{th}}\bigg),
\end{equation}
where $S$ represents the spectrum efficiency of the HMD device of AVs, while $\mu_{m,n}$ signifies the effective capacitance coefficient for the computing chipset associated with the type-$(\theta_m,\sigma_n)$ RSUs. The weights $\zeta_1$ and $\zeta_2$ are greater than zero and $\zeta_1+\zeta_2=1$, ensuring proper weighting. Building upon this analysis, we define the immersion metric as
\begin{equation}\label{M}
\begin{split}
    M_{m,n}=&b_{m,n}\ln\bigg(1+\frac{p_{m,n}|g_{m,n}|^2}{b_{m,n}N_0}\bigg) \\
    &\ln\bigg(\frac{Dv\big(\zeta_1Sb_{m,n}+\zeta_2\mu_{m,n}f_{m,n}^2\big)}{T_{th}}\bigg).
\end{split}
\end{equation}

The transmission latency between RSUs and AVs arises due to factors such as distance and the available bandwidth for data transmission. Therefore, the latency can be quantified by considering the distance between RSUs and AVs, as well as the bandwidth capacity required for transmitting data \cite{10301793}
\begin{equation}\label{Dmn}
    D_{m,n}=c_{m,n}d_{m,n}b_{m,n},
\end{equation}
where $c_{m,n}$ and $d_{m,n}$ represent the latency of unit bandwidth transmitted unit distance and the distance between the type-$(\theta_m,\sigma_n)$ RSUs and the AV, respectively. Based on Eqs. (\ref{U}), (\ref{M}), and (\ref{Dmn}), the utility of the AV receives resources from the type-$(\theta_m,\sigma_n)$ RSUs for embodied AI twins migration is expressed as
\begin{equation}
\begin{split}
    U_{m,n}=&\alpha b_{m,n}\ln\bigg(1+\frac{p_{m,n}|g_{m,n}|^2}{b_{m,n}N_0}\bigg) \\
    &\ln\bigg(\frac{Dv\big(\zeta_1Sb_{m,n}+\zeta_2\mu_{m,n}f_{m,n}^2\big)}{T_{th}}\bigg) \\
    &-\beta c_{m,n}d_{m,n}b_{m,n}-R_{m,n}.
\end{split}
\end{equation}
The expected utility of the AV for all types of RSUs based on EUT is expressed as
\begin{equation}
    U^{EUT}=\sum_{m=1}^M\sum_{n=1}^NQ_{m,n}U_{m,n}.
\end{equation}

Since the interaction between AVs and RSUs may be uncertain, relying solely on EUT for evaluating the utilities of AVs may not be practical. Instead of strictly adhering to rational decision-making, which aims to maximize utility under EUT, AVs often make decisions on relative rather than absolute utility, a principle known as PT. According to PT, AVs prioritize relative gains and losses in uncertain environments, where outcomes are not easily predictable, over maximizing utility. We introduce PT-based utility function construction for the AV to the type-($\theta_m,\sigma_n$) RSUs as 
\begin{equation}
    U_{m,n}^{PT}= \left\{ \begin{aligned}
    (U_{m,n}^{EUT}-U_{m,n}^{ref})^{\delta^{+}},\: U_{m,n}^{EUT}\geq U_{m,n}^{ref},\\
    - \kappa (U_{m,n}^{ref}-U_{m,n}^{EUT})^{\delta^{-}},\: U_{m,n}^{EUT}<U_{m,n}^{ref},\\
\end{aligned} \right.
\end{equation}
where $0\le \delta^{+}\le 1$ and $0\le \delta^{-}\le 1$ are the weighting parameters of gain and loss distortions, respectively \cite{huang2021efficient}. $\kappa\ge 0$ denotes the aversion parameter and $U_{m,n}^{ref}$ represents the reference utility of the AV for the type-($\theta_m,\sigma_n$) RSUs. If the EUT-based utility of the AV for the type-($\theta_m,\sigma_n$) RSUs exceeds $U_{m,n}^{ref}$, a gain is obtained. Otherwise, a loss is obtained. The PT-based expected utility of the AV for all RSUs is expressed as
\begin{equation}\label{U}
    U^{PT}=\sum_{m=1}^M\sum_{n=1}^NQ_{m,n}U_{m,n}^{PT}.
\end{equation}
Computing the utility based on PT allows AVs to adapt more effectively to dynamic conditions, enhancing their decision-making capabilities.

\subsection{Contract Formulation and Feasibility}
\begin{figure}[t]
\centering
\includegraphics[width=0.45\textwidth]{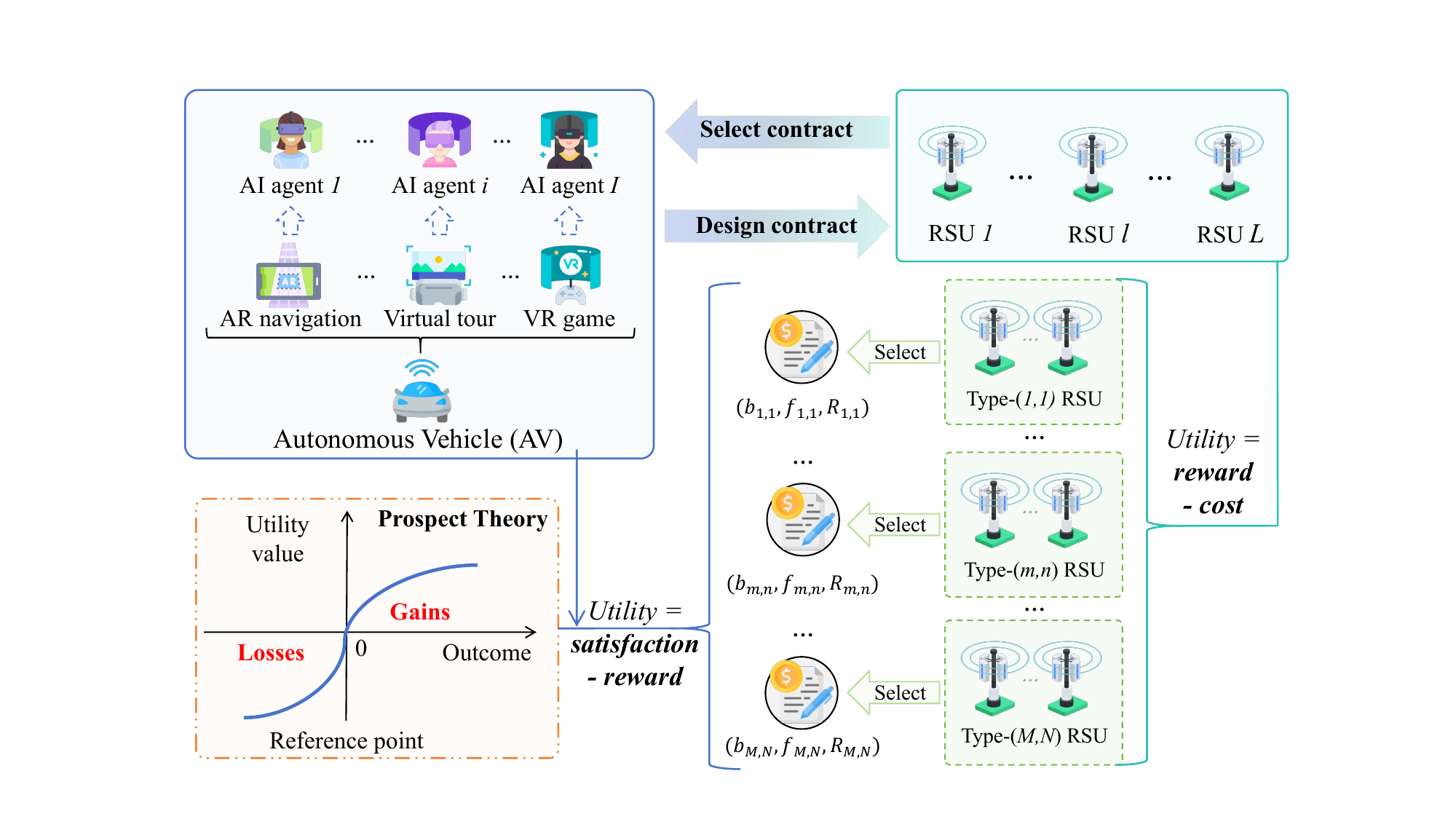}
\captionsetup{font=footnotesize}
\caption{An illustration for the utilities based on prospect theory in the contract theory model.}
\label{contract_fig}
\end{figure}

We formulate a contract model and find the optimal contract for single-AV and multi-RSU scenarios.

\subsubsection{Contract Formulation}
AVs determine the reward payment for RSUs based on their computing and bandwidth capabilities. However, since RSUs protect their cost information, AVs may lack awareness of the two types of costs associated with each RSU. To address this information asymmetry, we propose a contract theory model between AVs and RSUs as an incentive mechanism to encourage RSUs to offer resources. The AV is the principal designing the contract, while RSUs are the agents selecting the optimal contracts. The contract item is denoted as $\boldsymbol\rho=\{b_{m,n},f_{m,n},R_{m,n},1\le n\le N,1\le m\le M\}$, and the type-$(\theta_m,\sigma_n)$ RSUs should choose the item $(b_{m,n},f_{m,n},R_{m,n})$ tailed for it \cite{zhang2020contracts}. Considering Eq. (\ref{Vmn}), we use the symbol $V_{m,n}^{p,q}$ to denote the type-$(\theta_m,\sigma_n)$ RSUs selects the contract item $\{b_{p,q},f_{p,q},R_{p,q}\}$, which is design for the type-$(\theta_p,\sigma_q)$ RSUs. $V_{m,n}^{p,q}$ is expressed as
\begin{equation}
    V_{m,n}^{p,q}=R_{p,q}-\frac{b_{p,q}^2}{\theta_m}-\frac{f_{p,q}^2}{\sigma_n}.
\end{equation}
To ensure the optimal selection of contract terms suitable for each RSU type, it is essential to satisfy Individual Rationality (IR) and Incentive Compatibility (IC) constraints, which are defined as follows \cite{wen2023task}.
\begin{definition}
{\textbf{(Individual Rationality):}} All the type-$(\theta_m,\sigma_n)$ RSUs guarantee a non-negative utility of their selected contract item $\{b_{m,n},f_{m,n},R_{m,n},1\le n\le N,1\le m\le M\}$, i.e.,
\begin{equation}\label{IR}
\begin{split}
    V_{m,n}^{m,n}=R_{m,n}-\frac{b_{m,n}^2}{\theta_m}-\frac{f_{m,n}^2}{\sigma_n}\ge 0,\\
     \: 1\le m\le M,\: 1\le n\le N.
\end{split}
\end{equation}
\end{definition}

\begin{definition}
{\textbf{(Incentive Compatibility):}} All the type-$(\theta_m,\sigma_n)$ RSUs prefer to select the contract item $\{b_{m,n},f_{m,n},R_{m,n},1\le n\le N,1\le m\le M\}$ designed for its type rather than any other contract item $\{b_{i,j},f_{i,j},R_{i,j},1\le i\le N,1\le j\le M\}$, where $m\neq i$ and $n\neq j$, i.e.,
\begin{equation}\label{IC}
\begin{split}
    V_{m,n}^{m,n}\ge\max\{V_{m,n}^{i,n},V_{m,n}^{m,j},V_{m,n}^{i,j}\},\:  \: 1\le m,i\le M,\\
    \text{and}\: m\neq i,\:\text{and}\:   \:1\le n, j\le N,\:\text{and}\: n\neq j.    
\end{split}
\end{equation}
\end{definition}

By imposing IR constraints, the system ensures the active participation of RSUs, while the IC constraints guarantee that each RSU selects the contract item specifically designed for its type, aiming to maximize the benefits obtained. Through the integration of both IR and IC constraints, the AV seeks to enhance their expected utility \cite{wen2023task}. The problem of maximizing the expected utility of the AV is formulated as
\begin{equation}\label{problem1}
    \begin{split}
    \textbf{Problem:}\:&\max\limits_{\boldsymbol{b}_{m,n},\boldsymbol{f}_{m,n},\boldsymbol{R}_{m,n}}\:U^{PT}  \\
    &\:\:\text{s.t.}\:\: (\ref{IR})\: \text{and} \:(\ref{IC}),\\
    &\:\:\:\:\:\:\:\:\: b_{m,n}\ge 0,f_{m,n}\ge 0,R_{m,n}\ge 0,\\
    &\:\:\:\:\:\:\:\:\:\theta_m\ge 0,\sigma_n\ge 0,
    \end{split}
\end{equation}
where $\boldsymbol{b}_{m,n}=\{b_{1,1},\cdots,b_{m,n},\cdots,b_{M,N}\}$, $\boldsymbol{f}_{m,n}=\{f_{1,1},\cdots,f_{m,n},\cdots,f_{M,N}\}$, and $\boldsymbol{R}_{m,n}=\{R_{1,1},\cdots,R_{m,n},\cdots,R_{M,N}\}$, which are the design of the contracts for all type of RSUs.

\subsubsection{Contract Feasibility}
Considering (\ref{problem1}), it is evident that the problem formulated is a multi-dimensional non-convex optimization problem \cite{zhang2020contracts}. With $MN$ IR constraints and $MN(MN-1)$ IC constraints, solving this problem directly becomes challenging. Consequently, constraint reduction becomes imperative. We study the properties of $V_{m,n}^{m,n}$ and derive the following Lemmas to validate the feasibility of the proposed contract.

\begin{lemma}\label{lemma1}
    For $ \: 1\le m,i\le M$ and $ \: 1\le n,j\le N$, if $m>i$ and $n>j$, we have 
    \begin{equation}
    \begin{split}
        b_{i,j}&\le\max\{b_{i,n},b_{m,j}\}\le b_{m,n}, \\
        f_{i,j}&\le\max\{f_{i,n},f_{m,j}\}\le f_{m,n}.
    \end{split}
    \end{equation}
\end{lemma}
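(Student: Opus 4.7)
The plan is to derive the claimed monotonicities by strategically summing pairs of IC constraints so that the reward terms cancel, isolating the dependence on $b$ and $f$.

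First, I would establish the one-dimensional monotonicities. Take any two RSU types that agree on one coordinate and add their two directed IC constraints: the $R$ terms cancel automatically, and the shared cost term cancels as well. For the pair $(\theta_i,\sigma_j)$ and $(\theta_m,\sigma_j)$, summing $V_{i,j}^{i,j}\ge V_{i,j}^{m,j}$ with $V_{m,j}^{m,j}\ge V_{m,j}^{i,j}$ eliminates both the rewards and the $f^2/\sigma_j$ pieces, leaving
\begin{equation*}
(b_{m,j}^2-b_{i,j}^2)\left(\tfrac{1}{\theta_i}-\tfrac{1}{\theta_m}\right)\ge 0.
\end{equation*}
Because $\theta_m>\theta_i$ the bracket is positive, so $b_{m,j}\ge b_{i,j}$; replacing $j$ by $n$ gives $b_{m,n}\ge b_{i,n}$. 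By the mirrored argument on pairs that share the $\theta$ index (which cancels the $b^2/\theta$ pieces), I would obtain $f_{i,n}\ge f_{i,j}$ and $f_{m,n}\ge f_{m,j}$.

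Next, I would assemble the max-based chains from these building blocks. The lower bound $b_{i,j}\le\max\{b_{i,n},b_{m,j}\}$ is immediate because $b_{i,j}\le b_{m,j}$ and $b_{m,j}$ already appears inside the max; the $f$-analog follows identically. The upper bound $\max\{b_{i,n},b_{m,j}\}\le b_{m,n}$ splits into $b_{i,n}\le b_{m,n}$, which is already in hand, and the harder within-row comparison $b_{m,j}\le b_{m,n}$. Since pairwise IC within row $m$ cancels the $b$-terms and only constrains $f$, this direction cannot be extracted from a single IC pair. My plan is to combine the two-dimensional anti-diagonal IC constraints $V_{m,j}^{m,j}\ge V_{m,j}^{i,n}$ and $V_{i,n}^{i,n}\ge V_{i,n}^{m,j}$, which couple $(b_{m,j},b_{i,n})$ with $(f_{m,j},f_{i,n})$ through opposite-sign coefficients $(\tfrac{1}{\theta_i}-\tfrac{1}{\theta_m})>0$ and $(\tfrac{1}{\sigma_n}-\tfrac{1}{\sigma_j})<0$, and then feed the already-established row/column monotonicities of $f$ back into this coupling to pin down the sign of $b_{m,n}-b_{m,j}$. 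The $f$-chain $\max\{f_{i,n},f_{m,j}\}\le f_{m,n}$ is then handled by the column-symmetric version of the same argument.

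The principal obstacle is precisely this cross-dimension closure. The utility $V_{m,n}^{p,q}=R_{p,q}-b_{p,q}^2/\theta_m-f_{p,q}^2/\sigma_n$ is additively separable, so pairwise IC within a row forces only $f$-monotonicity and pairwise IC within a column forces only $b$-monotonicity; the lemma's diagonal direction therefore must rely on the two-dimensional IC comparisons interacting with the one-dimensional ones already derived. Once that coupling is exploited, the remaining work is routine algebra on the quadratic cost terms together with sign analysis using $\theta_i<\theta_m$ and $\sigma_j<\sigma_n$.
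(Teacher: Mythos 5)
The first half of your argument is correct and is the standard one: summing the two directed IC constraints for a pair of types that share one coordinate cancels the rewards and the shared cost term, yielding $b_{m,j}\ge b_{i,j}$, $b_{m,n}\ge b_{i,n}$, $f_{i,n}\ge f_{i,j}$, and $f_{m,n}\ge f_{m,j}$; the lower bounds $b_{i,j}\le\max\{b_{i,n},b_{m,j}\}$ and $f_{i,j}\le\max\{f_{i,n},f_{m,j}\}$, as well as $b_{i,n}\le b_{m,n}$ and $f_{m,j}\le f_{m,n}$, then follow. (Note that the paper itself gives no proof of this lemma, deferring entirely to \cite{zhang2020contracts}, so there is no in-paper argument to compare against.)

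The gap is in your plan for closing the remaining comparisons $b_{m,j}\le b_{m,n}$ and $f_{i,n}\le f_{m,n}$. Summing the anti-diagonal IC pair between types $(\theta_m,\sigma_j)$ and $(\theta_i,\sigma_n)$ yields only the single scalar inequality
\begin{equation*}
\bigl(b_{m,j}^2-b_{i,n}^2\bigr)\Bigl(\tfrac{1}{\theta_i}-\tfrac{1}{\theta_m}\Bigr)\ \ge\ \bigl(f_{m,j}^2-f_{i,n}^2\bigr)\Bigl(\tfrac{1}{\sigma_j}-\tfrac{1}{\sigma_n}\Bigr),
\end{equation*}
which, combined with the four one-dimensional monotonicities you already have, does not pin down the sign of $b_{m,n}-b_{m,j}$: as soon as $b_{m,j}\ge b_{i,n}$ the inequality becomes vacuous. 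In fact no amount of manipulation of the IR and IC constraints can deliver the claim, because IC-feasible contracts violating it exist. Take $M=N=2$ with $1/\theta_1=1/\sigma_1=2$ and $1/\theta_2=1/\sigma_2=1$, and set $(b^2,f^2,R)$ equal to $(0,0,0)$, $(0,10,10)$, $(10,0,10)$, $(5,5,12)$ for types $(1,1),(1,2),(2,1),(2,2)$, respectively. Direct substitution shows every constraint in (\ref{IR}) and (\ref{IC}) holds (for instance $V_{2,2}^{2,2}=2$ while $V_{2,2}^{1,1}=V_{2,2}^{1,2}=V_{2,2}^{2,1}=0$, and $V_{1,2}^{1,2}=V_{2,1}^{2,1}=V_{1,1}^{1,1}=0$ with all deviations nonpositive), yet $b_{2,1}=\sqrt{10}>\sqrt{5}=b_{2,2}$, so $\max\{b_{1,2},b_{2,1}\}\le b_{2,2}$ fails. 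The cross-dimension monotonicity is therefore not a consequence of incentive feasibility alone; in the multi-dimensional contract literature it is either imposed as part of a sufficient condition for feasibility or extracted from the structure of the \emph{optimal} contract, and a correct proof of Lemma \ref{lemma1} must invoke one of those additional ingredients rather than further combinations of pairwise IC constraints.
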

\begin{proof}
    Please refer to \cite{zhang2020contracts}.
\end{proof}

\begin{lemma}\label{lemma2}
    For $ \: 1\le m\le M$ and $ \: 1\le n\le N$, there are      $V_{m,n}^{m,n}\ge V_{m,n-1}^{m,n-1},\:V_{m,n}^{m,n}\ge V_{m-1,n}^{m-1,n}$, and $\:V_{m,n}^{m,n}\ge V_{m-1,n-1}^{m-1,n-1}$.
\end{lemma}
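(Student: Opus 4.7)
The plan is to chain together two simple facts for each inequality: first, an incentive compatibility inequality stating that a type-$(\theta_m,\sigma_n)$ RSU prefers its own item to any other; second, a direct cost comparison using the ordering $\theta_1<\cdots<\theta_M$ and $\sigma_1<\cdots<\sigma_N$, which makes the cost terms $b^2/\theta_m$ and $f^2/\sigma_n$ decreasing in the subscripts. This two-step pattern handles all three inequalities identically.

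Concretely, to establish $V_{m,n}^{m,n}\ge V_{m-1,n}^{m-1,n}$, I would first invoke the IC constraint in (\ref{IC}) to write $V_{m,n}^{m,n}\ge V_{m,n}^{m-1,n}$, since type-$(\theta_m,\sigma_n)$ weakly prefers its own bundle over the bundle $\{b_{m-1,n},f_{m-1,n},R_{m-1,n}\}$ designed for type-$(\theta_{m-1},\sigma_n)$. Then I would compare
\begin{equation}
V_{m,n}^{m-1,n}=R_{m-1,n}-\frac{b_{m-1,n}^{2}}{\theta_m}-\frac{f_{m-1,n}^{2}}{\sigma_n}
\end{equation}
with
\begin{equation}
V_{m-1,n}^{m-1,n}=R_{m-1,n}-\frac{b_{m-1,n}^{2}}{\theta_{m-1}}-\frac{f_{m-1,n}^{2}}{\sigma_n}.
\end{equation}
Since $\theta_m>\theta_{m-1}>0$, the subtracted term $b_{m-1,n}^{2}/\theta_m$ is strictly smaller than $b_{m-1,n}^{2}/\theta_{m-1}$, so $V_{m,n}^{m-1,n}\ge V_{m-1,n}^{m-1,n}$, with equality only when $b_{m-1,n}=0$. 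Chaining the two inequalities gives the claim.

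The argument for $V_{m,n}^{m,n}\ge V_{m,n-1}^{m,n-1}$ is symmetric: apply IC to obtain $V_{m,n}^{m,n}\ge V_{m,n}^{m,n-1}$, then use $\sigma_n>\sigma_{n-1}$ so that $f_{m,n-1}^{2}/\sigma_n\le f_{m,n-1}^{2}/\sigma_{n-1}$, which yields $V_{m,n}^{m,n-1}\ge V_{m,n-1}^{m,n-1}$. For the diagonal inequality $V_{m,n}^{m,n}\ge V_{m-1,n-1}^{m-1,n-1}$, I would apply IC to get $V_{m,n}^{m,n}\ge V_{m,n}^{m-1,n-1}$, and then simultaneously exploit both $\theta_m>\theta_{m-1}$ and $\sigma_n>\sigma_{n-1}$ to dominate $V_{m-1,n-1}^{m-1,n-1}$ by $V_{m,n}^{m-1,n-1}$ term by term.

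There is no real obstacle here; the lemma is essentially the standard ``higher types receive at least as much utility'' property familiar from one-dimensional screening, lifted to the two-dimensional lattice. The only mildly delicate point is making sure I invoke the correct IC inequality from the cluster in (\ref{IC}) for each case, i.e., the one where the deviating bundle is exactly the one earned by the lower-type RSU on the relevant axis (or both axes). Once that bookkeeping is right, the cost-monotonicity step is immediate from $\theta_{m-1}<\theta_m$ and $\sigma_{n-1}<\sigma_n$.
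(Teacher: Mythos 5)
Your proof is correct: the two-step chain (IC inequality $V_{m,n}^{m,n}\ge V_{m,n}^{p,q}$ followed by the cost-monotonicity comparison $V_{m,n}^{p,q}\ge V_{p,q}^{p,q}$ using $\theta_{m-1}<\theta_m$ and $\sigma_{n-1}<\sigma_n$) is exactly the standard argument, and each of the three cases is handled by the appropriate term in the $\max$ of the IC constraint (\ref{IC}). The paper itself does not print a proof of Lemma \ref{lemma2} but defers to \cite{zhang2020contracts}, where this same monotonicity argument is used, so your approach matches the intended one.
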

\begin{proof}
    Please refer to \cite{zhang2020contracts}.
\end{proof}

\begin{lemma}\label{lemma3}
    For the type-${(\theta_{m+1},\sigma_{n+1})}$ RSU, there are $V^{m,n}_{m+1,n+1}\ge V^{m,n-1}_{m+1,n+1},\:V^{m,n}_{m+1,n+1}\ge V^{m-1,n}_{m+1,n+1}$ and $\:V^{m,n}_{m+1,n+1}\ge V^{m-1,n-1}_{m+1,n+1}$, for $ \: 1\le n\le N$ and $ \: 1\le m\le M$.
\end{lemma}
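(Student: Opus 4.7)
The plan is to derive each of the three inequalities in Lemma 3 by combining an IC constraint that already holds at the lower type $(\theta_m,\sigma_n)$ with the componentwise monotonicity guaranteed by Lemma 1, and then exploiting the strict ordering $\theta_m<\theta_{m+1}$ and $\sigma_n<\sigma_{n+1}$. Intuitively, the contract $(b_{m,n},f_{m,n},R_{m,n})$ allocates weakly more bandwidth and CPU than each of the three ``smaller'' contracts $(m,n-1)$, $(m-1,n)$, $(m-1,n-1)$, so moving to the lower-cost type $(\theta_{m+1},\sigma_{n+1})$ shrinks the cost gap but leaves the reward gap untouched, which preserves the direction of the inequality.

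First, for $V^{m,n}_{m+1,n+1}\ge V^{m,n-1}_{m+1,n+1}$ I would start from the IC constraint $V_{m,n}^{m,n}\ge V_{m,n}^{m,n-1}$ that holds for any feasible contract by (\ref{IC}). Expanding via Eq.~(\ref{Vmn}) and rearranging yields
\[
R_{m,n}-R_{m,n-1}\ \ge\ \frac{b_{m,n}^{2}-b_{m,n-1}^{2}}{\theta_{m}}+\frac{f_{m,n}^{2}-f_{m,n-1}^{2}}{\sigma_{n}}.
\]
Second, Lemma~\ref{lemma1} delivers $b_{m,n}\ge b_{m,n-1}$ and $f_{m,n}\ge f_{m,n-1}$, so both numerators on the right-hand side are non-negative. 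Since $\theta_{m+1}>\theta_{m}$ and $\sigma_{n+1}>\sigma_{n}$, enlarging the denominators can only shrink the right-hand side, so the inequality is preserved. Rearranging the resulting bound reproduces $V^{m,n}_{m+1,n+1}\ge V^{m,n-1}_{m+1,n+1}$ exactly.

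Third, the remaining two inequalities follow by the same template. For $V^{m,n}_{m+1,n+1}\ge V^{m-1,n}_{m+1,n+1}$ I would start from the IC bound $V_{m,n}^{m,n}\ge V_{m,n}^{m-1,n}$ paired with $b_{m,n}\ge b_{m-1,n}$ and $f_{m,n}\ge f_{m-1,n}$ from Lemma~\ref{lemma1}; for $V^{m,n}_{m+1,n+1}\ge V^{m-1,n-1}_{m+1,n+1}$ I would use $V_{m,n}^{m,n}\ge V_{m,n}^{m-1,n-1}$ together with $b_{m,n}\ge b_{m-1,n-1}$ and $f_{m,n}\ge f_{m-1,n-1}$. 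Each case collapses to the same two-step maneuver: extract a lower bound on the reward difference from IC at type $(\theta_m,\sigma_n)$, then relax the denominators to $\theta_{m+1},\sigma_{n+1}$.

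The main obstacle is purely bookkeeping: the substitution step is legitimate \emph{only} because the numerators $b_{m,n}^{2}-b_{i,j}^{2}$ and $f_{m,n}^{2}-f_{i,j}^{2}$ are non-negative, and that non-negativity is precisely the content of Lemma~\ref{lemma1}. In other words, the real analytic work was done in establishing the monotonicity of $b$ and $f$; once that is in hand, Lemma~\ref{lemma3} is a one-line algebraic consequence of pairing each claim with the right IC inequality at the lower type.
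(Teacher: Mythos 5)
Your proposal is correct, and it is essentially the canonical argument for this type of lemma: take the IC inequality at type $(\theta_m,\sigma_n)$, note that the monotonicity from Lemma~\ref{lemma1} makes the numerators $b_{m,n}^2-b_{i,j}^2$ and $f_{m,n}^2-f_{i,j}^2$ non-negative, and then relax the denominators from $\theta_m,\sigma_n$ to the larger $\theta_{m+1},\sigma_{n+1}$, which only weakens the right-hand side and so preserves the inequality after rearranging back into the form $V^{m,n}_{m+1,n+1}\ge V^{i,j}_{m+1,n+1}$. The paper itself gives no proof here (it only cites an external reference), so your self-contained derivation fills that gap correctly; the only cosmetic caveat is that, as in the paper's own statement, the claims involving indices $m-1$ or $n-1$ are only meaningful for $m\ge 2$ or $n\ge 2$.
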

\begin{proof}
    Please refer to \cite{zhang2020contracts}.
\end{proof}

\begin{lemma}\label{lemma4}
    The IR constraint defined in (\ref{IR}) can be reduced as $V_{1,1}^{1,1}>0$.
\end{lemma}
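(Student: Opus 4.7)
The plan is to use Lemma \ref{lemma2} as the sole ingredient and to propagate the single constraint $V_{1,1}^{1,1}\ge 0$ upward through the two-dimensional lattice of RSU types. Since Lemma \ref{lemma2} gives three monotonicity directions, namely $V_{m,n}^{m,n}\ge V_{m-1,n}^{m-1,n}$, $V_{m,n}^{m,n}\ge V_{m,n-1}^{m,n-1}$, and $V_{m,n}^{m,n}\ge V_{m-1,n-1}^{m-1,n-1}$, we have enough to connect every index pair $(m,n)$ back to $(1,1)$ by a chain of valid one-step comparisons.

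Concretely, I would argue by induction on $m+n$. The base case $m=n=1$ is the hypothesis $V_{1,1}^{1,1}\ge 0$. For the inductive step, take any $(m,n)$ with $m+n>2$. At least one of the following reductions is available: if $m>1$ and $n>1$, apply the diagonal inequality $V_{m,n}^{m,n}\ge V_{m-1,n-1}^{m-1,n-1}$; if $m>1$ and $n=1$, apply the horizontal inequality $V_{m,1}^{m,1}\ge V_{m-1,1}^{m-1,1}$; and if $m=1$ and $n>1$, apply the vertical inequality $V_{1,n}^{1,n}\ge V_{1,n-1}^{1,n-1}$. In every case the right-hand side has a strictly smaller index sum, so the inductive hypothesis gives $V_{m,n}^{m,n}\ge V_{1,1}^{1,1}\ge 0$, which is exactly the IR constraint (\ref{IR}) for type $(\theta_m,\sigma_n)$.

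Conversely, since $V_{1,1}^{1,1}\ge 0$ is itself one of the $MN$ IR constraints, it is clearly necessary. Hence the full set of IR constraints collapses to the single inequality $V_{1,1}^{1,1}\ge 0$, which is precisely the assertion of the lemma. The argument is purely a chaining exercise once Lemma \ref{lemma2} is in hand, so there is no genuinely hard step; the only thing to be careful about is exhausting the boundary cases (where either $m=1$ or $n=1$) so that one always has an allowable monotonicity move available, which is already arranged by the three cases provided in Lemma \ref{lemma2}.
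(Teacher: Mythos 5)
Your argument is correct and is essentially the standard reduction the paper intends: the paper itself only cites an external reference for this lemma, but its accompanying Remark describes exactly your logic (monotonicity of $V_{m,n}^{m,n}$ from Lemma \ref{lemma2} propagates the single constraint at type $(\theta_1,\sigma_1)$ to all types, and that constraint is trivially necessary). The only cosmetic mismatch is that the lemma writes $V_{1,1}^{1,1}>0$ while the IR constraint in (\ref{IR}) is $\ge 0$; your use of $\ge 0$ is the consistent reading and not a gap in your proof.
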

\begin{proof}
    Please refer to \cite{zhang2020contracts}.
\end{proof}

\begin{lemma}\label{lemma5}
    For the type-${(\theta_m,\sigma_n)}$ RSUs, the IC constraints can be reduced as Local Downward Incentive Compatibility (LDIC), shown as
    \begin{equation}
        \begin{split}
            V_{m,n}^{m,n}\ge \max\{V_{m,n}^{m,n-1},V_{m,n}^{m-1,n},V_{m,n}^{m-1,n-1}\},\\
        2\le m\le M, \: 2\le n\le N,
        \end{split}
    \end{equation}
    and Local Upward Incentive Compatibility (LUIC), shown as
    \begin{equation}
        \begin{split}
            V_{m,n}^{m,n}\ge \max\{V_{m,n}^{m,n+1},V_{m,n}^{m+1,n},V_{m,n}^{m+1,n+1}\},\\
            1\le m\le M-1, \: 1\le n\le N-1.
        \end{split}
    \end{equation}
\end{lemma}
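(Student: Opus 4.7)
The plan is to establish that the local IC constraints (LDIC and LUIC), combined with the monotonicity of $\{b_{m,n}\}$ and $\{f_{m,n}\}$ from Lemma \ref{lemma1}, imply all $MN(MN-1)$ global IC constraints of (\ref{IC}). I would organize the argument by induction on the coordinate-wise distance between the target type and the alternative contract type, splitting into the pure-downward, pure-upward, and mixed-direction cases.

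For the pure-downward case, I want to show $V_{m,n}^{m,n} \ge V_{m,n}^{p,q}$ for all $p \le m$ and $q \le n$ by induction on $(m-p)+(n-q)$. The base case is exactly LDIC. For the inductive step, the key algebraic identity, obtained directly from the definition of $V_{m,n}^{p,q}$, is
\[
\bigl(V_{m,n}^{p+1,q+1} - V_{m,n}^{p,q}\bigr) - \bigl(V_{p+1,q+1}^{p+1,q+1} - V_{p+1,q+1}^{p,q}\bigr) = \bigl(b_{p+1,q+1}^2 - b_{p,q}^2\bigr)\Bigl(\frac{1}{\theta_{p+1}} - \frac{1}{\theta_m}\Bigr) + \bigl(f_{p+1,q+1}^2 - f_{p,q}^2\bigr)\Bigl(\frac{1}{\sigma_{q+1}} - \frac{1}{\sigma_n}\Bigr),
\]
which is nonnegative: Lemma \ref{lemma1} makes the squared-difference factors nonnegative, and the type ordering $\theta_{p+1} \le \theta_m$, $\sigma_{q+1} \le \sigma_n$ makes the reciprocal-difference factors nonnegative. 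Combining this with LDIC at $(p+1,q+1)$ gives $V_{m,n}^{p+1,q+1} \ge V_{m,n}^{p,q}$, and chaining with the inductive hypothesis $V_{m,n}^{m,n} \ge V_{m,n}^{p+1,q+1}$ closes the induction. The single-coordinate offsets are handled by the other two LDIC inequalities and the corresponding one-dimensional specializations of the identity above.

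The pure-upward case is symmetric, relying on LUIC and the reverse direction of the same comparison. For the mixed cases, such as $p < m$ with $q > n$, I would construct a chain that routes through a carefully chosen intermediate type of the form $(p,n)$ or $(m,q)$, applying at each link the pure-direction global IC just established and invoking Lemmas \ref{lemma2} and \ref{lemma3} to transfer utility comparisons across types. The main obstacle will be precisely these mixed-direction deviations: Lemma \ref{lemma1} only supplies monotonicity along componentwise-ordered pairs, so each intermediate link in the chain must be chosen so that every pair of types compared is ordered in the sense needed to apply the single-crossing identity, which forces the argument to be case-specific and to exploit the quadratic-cost structure of $V_{m,n}^{p,q}$ carefully. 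Once the mixed cases are closed, every constraint in (\ref{IC}) has been derived from LDIC and LUIC, completing the reduction.
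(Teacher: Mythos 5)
Your handling of the componentwise-ordered (pure-downward and pure-upward) deviations is correct and is in substance the paper's own argument: the single-crossing identity you isolate is exactly what underlies Lemma \ref{lemma3} and the algebraic step in Appendix A that upgrades $V_{m,n}^{m,n}\ge V_{m,n}^{m,n+1}$ to $V_{m-1,n-1}^{m,n}\ge V_{m-1,n-1}^{m,n+1}$, and your induction on $(m-p)+(n-q)$ reproduces the paper's chaining of LDIC down to type $(\theta_1,\sigma_1)$ and of LUIC upward.

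The genuine gap is the mixed case, and the chaining you sketch for it does not go through. Take $i<m$, $j>n$ and route through the intermediate type $(i,n)$: the pure-downward result gives $V_{m,n}^{m,n}\ge V_{m,n}^{i,n}$, so you need $V_{m,n}^{i,n}\ge V_{m,n}^{i,j}$. Comparing with the upward IC of type $(\theta_i,\sigma_n)$,
\begin{equation*}
\bigl(V_{m,n}^{i,n}-V_{m,n}^{i,j}\bigr)-\bigl(V_{i,n}^{i,n}-V_{i,n}^{i,j}\bigr)
=\bigl(b_{i,n}^{2}-b_{i,j}^{2}\bigr)\Bigl(\frac{1}{\theta_i}-\frac{1}{\theta_m}\Bigr)\le 0,
\end{equation*}
since $b_{i,j}\ge b_{i,n}$ while $\theta_i<\theta_m$: the correction term points the wrong way, so the link cannot be closed. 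Routing through $(m,j)$ fails symmetrically, with correction term $(f_{m,j}^{2}-f_{i,j}^{2})(1/\sigma_j-1/\sigma_n)\le 0$. This is the classical obstruction in multi-dimensional screening: local IC plus monotonicity controls only deviations to componentwise-comparable types. You have not, however, missed a step that the paper supplies: Appendix A partitions the $MN(MN-1)$ IC constraints into DIC with $m>i$, $n>j$ and UIC with $m<i$, $n<j$, a partition that does not exhaust the constraint set when $M,N\ge 2$ (and whose claimed cardinalities $MN(MN-1)/2$ are incorrect), so the anti-ordered deviations are silently dropped there as well. A complete proof must either retain the mixed local constraints in the reduced set or impose additional structure on the contract bundles that forces the correction terms above to vanish.
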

\begin{proof}
    Please see the proof in Appendix A.
\end{proof}

\begin{remark}
Lemma \ref{lemma1} suggests that AVs will request more resources from RSUs with higher types (i.e., lower costs), and vice versa. According to Lemma \ref{lemma2} and Lemma \ref{lemma3}, it can be inferred that RSUs with higher types (i.e., lower costs), have the potential for higher profits. Additionally, Lemma \ref{lemma4} establishes that if the lowest type RSU meets the IR constraint, then IR constraints of all RSUs will also hold. Lemma \ref{lemma5} means that if the IC constraints hold between type-$(\theta_m, \sigma_n)$ RSU and the next lower (higher) type RSU, then they also hold between type-$(\theta_m, \sigma_n)$ RSU and other RSUs of lower (higher) type. In other words, Lemma \ref{lemma5} asserts that IC constraints between RSU types cascade downwards (upwards), simplifying the fulfillment of constraints. Therefore, Lemmas \ref{lemma1}, \ref{lemma4}, and \ref{lemma5} serve as both necessary and sufficient conditions for IR and IC constraints, effectively reducing the total constraints.
\end{remark}

Based on the above analysis of the Lemmas, we can derive the utility of the type-($\theta_m,\sigma_n$) RSUs, which is shown as follows:

\begin{theorem}\label{theorem1}
    For $ \: 1\le m,i\le M$ and $ \: 1\le n,j\le N$, when $m>i$ and $n>j$, the utility of the type-($\theta_m,\sigma_n$) RSUs can be expressed as
    \begin{equation}\label{reduce1}
    \begin{split}
        V_{m,n}^{m,n}=&\sum_{i=1}^{m-1}\sum_{j=1}^{n-1}\big(\Delta_ib_{i,j}^2+\Lambda_jf_{i,j}^2\big)+\sum_{i=1}^{m-1}\sum_{j=1}^{n-1}\max\\
        &\bigg\{0,\Delta_i\big(b_{i,j+1}^2-b_{i,j}^2\big), \Lambda_j\big(f_{i+1,j}^2-f_{i,j}^2\big)\bigg\},
    \end{split}
    \end{equation}
    where $\Delta_i=\frac{1}{\theta_i}-\frac{1}{\theta_{i+1}}>0$, and $\Lambda_j=\frac{1}{\sigma_j}-\frac{1}{\sigma_{j+1}}>0$.
\end{theorem}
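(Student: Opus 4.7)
The plan is to derive the closed form for $V_{m,n}^{m,n}$ by reducing the principal's problem to a max-plus recursion via the IR/IC reductions of Lemmas \ref{lemma4} and \ref{lemma5}, and then unrolling that recursion by double induction. The optimality observation is that in (\ref{problem1}) the principal wants each reward $R_{m,n}$ to be as small as possible while preserving incentive compatibility, so at the optimal contract the reduced LDIC of Lemma \ref{lemma5} must bind:
\begin{equation}
V_{m,n}^{m,n} \;=\; \max\{V_{m,n}^{m,n-1},\, V_{m,n}^{m-1,n},\, V_{m,n}^{m-1,n-1}\}, \quad m,n\ge 2,
\end{equation}
together with the base identity $V_{1,1}^{1,1}=0$ coming from the binding reduced IR of Lemma \ref{lemma4}.

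The first technical step is to convert the three off-diagonal utilities on the right-hand side into shifts of neighboring diagonal utilities using the cost expression in (\ref{Vmn}) and the definitions $\Delta_i=1/\theta_i-1/\theta_{i+1}$, $\Lambda_j=1/\sigma_j-1/\sigma_{j+1}$. A direct subtraction yields
\begin{equation}
\begin{aligned}
V_{m,n}^{m,n-1} &= V_{m,n-1}^{m,n-1} + \Lambda_{n-1} f_{m,n-1}^2, \\
V_{m,n}^{m-1,n} &= V_{m-1,n}^{m-1,n} + \Delta_{m-1} b_{m-1,n}^2, \\
V_{m,n}^{m-1,n-1} &= V_{m-1,n-1}^{m-1,n-1} + \Delta_{m-1} b_{m-1,n-1}^2 + \Lambda_{n-1} f_{m-1,n-1}^2,
\end{aligned}
\end{equation}
so the binding LDIC becomes a pure recursion among diagonal utilities $V_{\cdot,\cdot}^{\cdot,\cdot}$.

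Second, I would treat the two boundary families $V_{m,1}^{m,1}$ and $V_{1,n}^{1,n}$ separately, since only one direction of LDIC applies there; iterating the single-direction binding together with $V_{1,1}^{1,1}=0$ yields $V_{m,1}^{m,1}=\sum_{i=1}^{m-1}\Delta_i b_{i,1}^2$ and $V_{1,n}^{1,n}=\sum_{j=1}^{n-1}\Lambda_j f_{1,j}^2$, which provide the base of the induction. I would then induct on $m+n$: assuming the stated formula holds for every $(m',n')$ with $m'+n'<m+n$, substitute the inductive expressions for $V_{m-1,n-1}^{m-1,n-1}$, $V_{m-1,n}^{m-1,n}$, and $V_{m,n-1}^{m,n-1}$ into the three branches above and verify that the outer maximum reassembles into $\sum_{i,j}(\Delta_i b_{i,j}^2+\Lambda_j f_{i,j}^2)+\sum_{i,j}\max\{0,\Delta_i(b_{i,j+1}^2-b_{i,j}^2),\Lambda_j(f_{i+1,j}^2-f_{i,j}^2)\}$ over $1\le i\le m-1$, $1\le j\le n-1$.

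The main obstacle will be this last unfolding: showing that a max of three sums of maxes collapses into a single sum of per-cell maxes. The intuition is that the local contribution $\max\{0,\Delta_i(b_{i,j+1}^2-b_{i,j}^2),\Lambda_j(f_{i+1,j}^2-f_{i,j}^2)\}$ records the marginal information rent captured when the optimal inductive path from $(1,1)$ reaches cell $(i,j)$ and must choose between a diagonal, horizontal or vertical step. The non-negativity of each alternative inside the per-cell max is guaranteed by the monotonicity of $b_{i,j}$ and $f_{i,j}$ in Lemma \ref{lemma1} together with $\Delta_i,\Lambda_j>0$; this monotonicity is also what lets the three global branches be aligned so that their common diagonal terms cancel cleanly and only the per-cell marginal surpluses remain. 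Once this alignment is verified, the telescoping is immediate and the closed form of the theorem follows.
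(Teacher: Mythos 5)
Your setup is sound and follows essentially the same route as the paper's Appendix B: the three identities converting $V_{m,n}^{m,n-1}$, $V_{m,n}^{m-1,n}$, and $V_{m,n}^{m-1,n-1}$ into neighboring diagonal utilities plus rent terms are correct, the boundary formulas $V_{m,1}^{m,1}=\sum_{i=1}^{m-1}\Delta_i b_{i,1}^2$ and $V_{1,n}^{1,n}=\sum_{j=1}^{n-1}\Lambda_j f_{1,j}^2$ are the right base cases, and binding the reduced IR constraint of Lemma \ref{lemma4} together with the LDIC of Lemma \ref{lemma5} is the standard way to extract the information rent.

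However, the step you defer --- ``showing that a max of three sums of maxes collapses into a single sum of per-cell maxes'' --- is the entire content of the theorem, and as stated it does not go through. Each branch of the binding LDIC recursion accumulates rent only along a single monotone lattice path from $(1,1)$ to $(m,n)$, so unrolling the recursion yields a maximum over such paths of path-sums of per-cell maxes; this is generically strictly smaller than the rectangular double sum $\sum_{i=1}^{m-1}\sum_{j=1}^{n-1}$ in (\ref{reduce1}). Concretely, take $m=3$, $n=2$, $\Delta_1=\Delta_2=\Lambda_1=1$, $b_{1,1}^2=1$, $b_{2,1}^2=b_{1,2}^2=2$, $b_{2,2}^2=3$, $f_{1,1}^2=1$, $f_{2,1}^2=2$, $f_{3,1}^2=3$ (consistent with the monotonicity of Lemma \ref{lemma1}): your three branches evaluate to $6$, $6$, and $5$, so the recursion gives $V_{3,2}^{3,2}=6$, while the double-sum formula gives $8$. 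Hence the ``alignment'' you hope to verify is false and the induction on $m+n$ cannot close in the form you propose. (The paper's own appendix sidesteps this by iterating only the diagonal branch $V_{m,n}^{m,n}\to V_{m-1,n-1}^{m-1,n-1}$, which produces a single sum along the diagonal rather than the stated rectangular sum, so the closed form is not rigorously established there either; but you should be aware that your inductive step, taken literally, fails at the first off-diagonal cell.)
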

\begin{proof}
    Please see the proof in Appendix B.
\end{proof}

Furthermore, we can find the optimal contract $(b_{m,n}^*, f_{m,n}^*, R_{m,n}^*),   1\le m\le M, 1\le n\le N$, which shown as follows.
\begin{theorem}
    The optimal contract $(b_{m,n}^*, f_{m,n}^*, R_{m,n}^*),   1\le m\le M, 1\le n\le N$ for the type-($\theta_m,\sigma_n$) RSUs is expressed as
    \begin{equation}
    \begin{split}
        (b_{m,n}^*, f_{m,n}^*)=&\arg\max_{\small{(\boldsymbol b_{m,n}, \boldsymbol f_{m,n})}}\sum_{m=1}^M\sum_{n=1}^N\bigg(\alpha M_{m,n}-\\
        &\beta D_{m,n}-\bigg(V_{m,n}+\frac{b_{m,n}^2}{\theta_m}-\frac{f_{m,n}^2}{\sigma_n}\bigg)\bigg).
    \end{split}
    \end{equation}
    From Eq. (\ref{reduce1}), we can obtain the $V_{m,n}^*$ when $b_{m,n}=b_{m,n}^*$ and $f_{m,n}=f_{m,n}^*$. Consequently, we can get the optimal reward $R_{m,n}^*$, which is shown as
    \begin{equation}
        R_{m,n}^*=V_{m,n}^*+\frac{b_{m,n}^{*2}}{\theta_m}-\frac{f_{m,n}^{*2}}{\sigma_n}.
    \end{equation}
\end{theorem}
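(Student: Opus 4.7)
The plan is to eliminate $R_{m,n}$ and $V_{m,n}$ from the optimization one step at a time, turning (\ref{problem1}) into a pure resource-allocation problem over $\{b_{m,n}, f_{m,n}\}$. First, I would invert the definition (\ref{Vmn}) to write $R_{m,n} = V_{m,n} + b_{m,n}^2/\theta_m + f_{m,n}^2/\sigma_n$. Substituting this into $U_{m,n} = \alpha M_{m,n} - \beta D_{m,n} - R_{m,n}$ replaces the reward variable by the cheaper combination of the RSU's target utility and its resource allocation, so the AV no longer needs to optimize over $R_{m,n}$ directly; the decision variables reduce to $\{b_{m,n}, f_{m,n}, V_{m,n}\}$.

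Next, I would argue that at the optimum each $V_{m,n}^{m,n}$ is driven down to its lowest admissible value. Since the PT value function (\ref{PT}) is monotone increasing in $U_{m,n}^{EUT}$ on both the gain and loss branches, and $U_{m,n}^{EUT}$ is strictly decreasing in $V_{m,n}^{m,n}$ (through $R_{m,n}$), any slack in the IR/IC constraints strictly hurts the AV. By Lemma \ref{lemma4} and Lemma \ref{lemma5} the full IR/IC system collapses to $V_{1,1}^{1,1}=0$ plus the LDIC and LUIC conditions, and Theorem \ref{theorem1} then gives the tight closed form (\ref{reduce1}) for $V_{m,n}^{m,n}$ purely in terms of $\{b_{i,j}, f_{i,j}\}_{i\le m,\, j\le n}$. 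Substituting this expression into the AV's objective eliminates $V_{m,n}$ as well, leaving the $\arg\max$ over $(\boldsymbol b_{m,n}, \boldsymbol f_{m,n})$ stated in the theorem. Once $(b_{m,n}^*, f_{m,n}^*)$ is obtained, $V_{m,n}^*$ is read off directly from (\ref{reduce1}) and $R_{m,n}^*$ follows by the same inversion.

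The main obstacle is justifying that the unconstrained maximizer $(b_{m,n}^*, f_{m,n}^*)$ inherits the monotonicity required by Lemma \ref{lemma1}, namely that both $b_{m,n}^*$ and $f_{m,n}^*$ are non-decreasing in each type index. This is the classical bunching hazard in multi-dimensional screening: if the pointwise maximizer violates monotonicity, the closed form for $V_{m,n}^{m,n}$ in (\ref{reduce1}) ceases to be the binding lower envelope and an ironing step is needed. I would either impose a regularity condition on the joint type distribution $Q_{m,n}$ to rule this out, or appeal to the fact that the GDM-based search developed in Section \ref{Optimial_Contract} effectively restricts itself to monotone allocations and so converges to the correct optimum without requiring explicit ironing. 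A secondary but lighter step is verifying that the substitution of (\ref{reduce1}) into the PT objective preserves the $\arg\max$; this follows because the transformation in (\ref{PT}) is strictly increasing in its argument, so maximizing the PT-aggregated utility coincides with maximizing the reduced EUT form type-by-type after the constraints have been absorbed.
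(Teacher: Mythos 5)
Your overall route --- invert (\ref{Vmn}) to eliminate $R_{m,n}$, argue that the AV drives each $V_{m,n}^{m,n}$ down to the binding value given by Theorem~\ref{theorem1}, substitute (\ref{reduce1}) into the objective to leave an $\arg\max$ over $(\boldsymbol b_{m,n},\boldsymbol f_{m,n})$ only, and recover $R_{m,n}^*$ at the end --- is exactly the derivation the paper intends; the paper in fact supplies no proof of this theorem, and the binding-constraint logic you describe is the one used in Appendix~B to establish (\ref{reduce1}). Two of your observations improve on the paper: you correctly flag the bunching hazard (nothing in the paper verifies that the unconstrained maximizer $(b_{m,n}^*,f_{m,n}^*)$ satisfies the monotonicity of Lemma~\ref{lemma1}, without which the LDIC/LUIC reduction and hence (\ref{reduce1}) need not be the binding envelope at the optimum), and your inversion $R_{m,n}=V_{m,n}+b_{m,n}^2/\theta_m+f_{m,n}^2/\sigma_n$ is the correct consequence of (\ref{Vmn}); the minus sign on $f_{m,n}^{*2}/\sigma_n$ in the theorem statement appears to be a typo that you silently fix.

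The genuine gap is your final step reconciling the PT objective with the theorem's EUT-form objective. Problem (\ref{problem1}) maximizes $U^{PT}=\sum_{m,n}Q_{m,n}\,g(U_{m,n})$, where $g$ is the piecewise power value function of (\ref{PT}), whereas the theorem's $\arg\max$ is taken over the unweighted sum $\sum_{m,n}U_{m,n}$. Your claim that a strictly increasing transform ``preserves the $\arg\max$ type-by-type'' would be valid only if the problem decoupled across types, but it does not: after substituting (\ref{reduce1}), the rent term inside $U_{m,n}$ depends on $b_{i,j},f_{i,j}$ for all lower types, so each $b_{i,j}$ appears in many terms of the sum. A $Q_{m,n}$-weighted sum of distinct nonlinear increasing transforms of coupled quantities is not in general maximized at the same point as the unweighted sum of the quantities themselves, so this equivalence requires either a separability assumption, an explicit argument, or a restatement of the theorem with the weighted PT objective. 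The discrepancy is present in the paper as well, but since you explicitly assert the equivalence as a step of the proof, it stands as a gap in your argument as written.
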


\section{Generative Diffusion Model for Optimal Contract Design}\label{Optimial_Contract}

\begin{figure*}[t]
\centering
\includegraphics[width=0.9\textwidth]{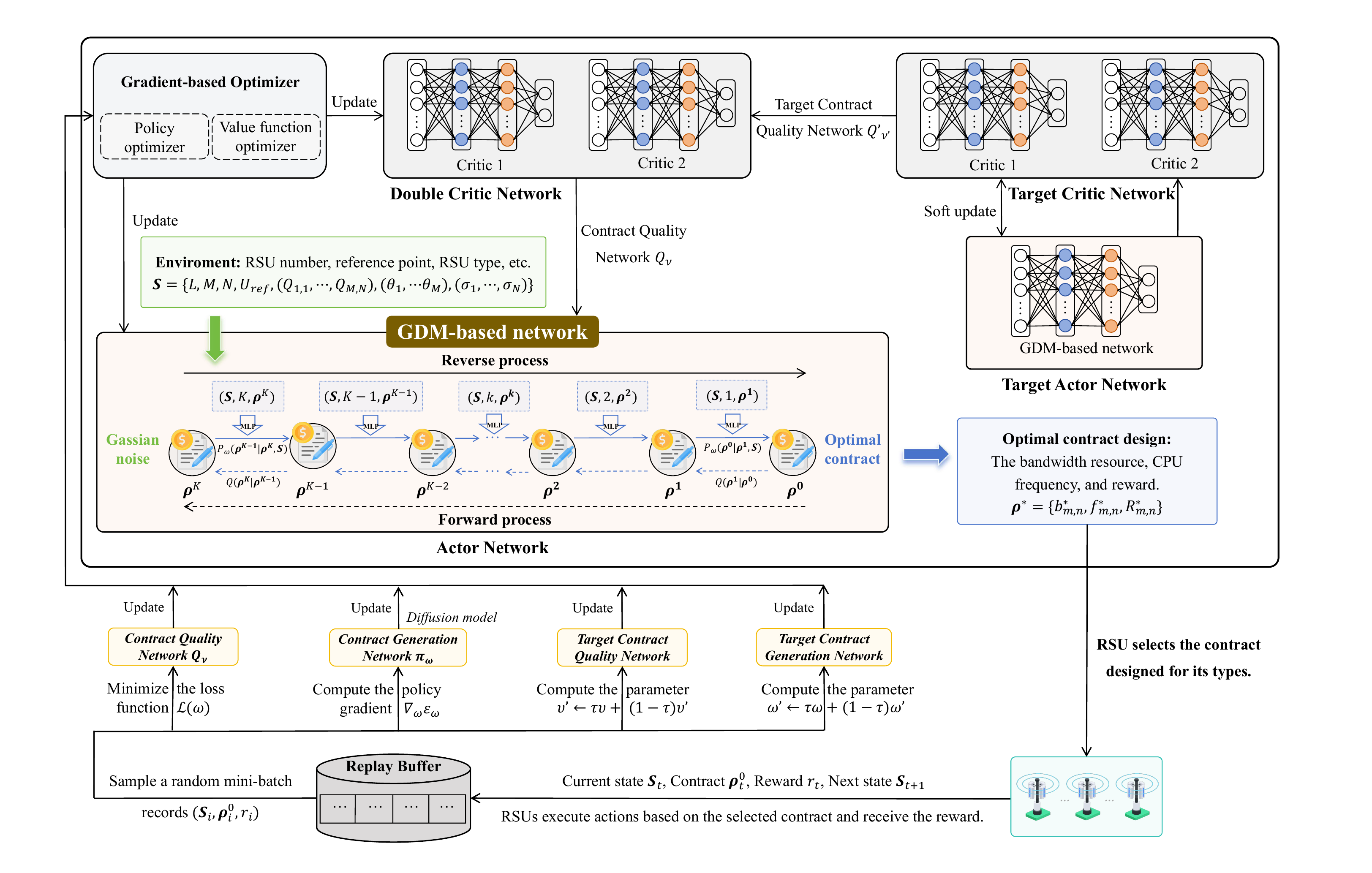}
\captionsetup{font=footnotesize}
\caption{GDM-based framework to find the optimal contract designs.}
\label{GDM}
\end{figure*}

GDMs seek to enhance contract design by iteratively refining initial distributions through denoising. Thus, we endorse employing a GDM-based approach to ascertain the optimal contract item, i.e., $\boldsymbol\rho^*=\{b_{m,n}^*,f_{m,n}^*,R_{m,n}^*\}$. Specifically, GDMs encompass both forward and reverse diffusion processes. In the forward diffusion, Gaussian noise incrementally augments an initial sample, i.e., $\boldsymbol\rho^0=\{b_{m,n}^0,f_{m,n}^0,R_{m,n}^0\}$. This process unfolds iteratively, typically represented as a Markov process with $K$ steps, yielding a sequence of samples $\{\boldsymbol\rho^1,\cdots,\boldsymbol\rho^k,\cdots,\boldsymbol\rho^K\}$ \cite{du2023deep}. The forward diffusion process of GDMs can be described as
\begin{equation}\label{forward}
    \begin{split}
        Q(\boldsymbol\rho^1,&\boldsymbol\rho^2,\cdots,\boldsymbol\rho^K|\boldsymbol\rho^0)=\prod_{k=1}^K Q(\boldsymbol\rho^k|\boldsymbol\rho^{k-1}),\\
        Q(\boldsymbol\rho^k|\boldsymbol\rho^{k-1})&=\mathcal{N}(\boldsymbol\rho^k;\boldsymbol\mu_k=\sqrt{1-\iota_k}\boldsymbol\rho^{k-1},\boldsymbol\Sigma_k=\iota_k\textbf{I}),
    \end{split}
\end{equation}
where $\iota_k\in(0,1)$ is a pre-determined hyperparameter. From Eq. (\ref{forward}), we can obtain that the sample $\boldsymbol\rho^k$ at the $k$-th step obeys Gaussian distribution, with a mean of $\boldsymbol\mu_k$ and a variance of $\boldsymbol\Sigma_k$. When $\iota_k$ is sufficiently small, the probability distribution of reverse diffusion process $Q(\boldsymbol\rho^{k-1}|\boldsymbol\rho^k,\boldsymbol\rho^0)$ is consistent with the posterior probability distribution of the forward diffusion process. The parametric model $P_\omega$ can be used to approximate $Q(\boldsymbol\rho^{k-1}|\boldsymbol\rho^k,\boldsymbol\rho^0)$ as
\begin{equation}
\begin{split}
    P_\omega(\boldsymbol\rho^0,\boldsymbol\rho^1,\cdots,\boldsymbol\rho^K)=P_\omega(\boldsymbol\rho^K)\prod_{k=1}^KP_\omega(\boldsymbol\rho^{k-1}|\boldsymbol\rho^k),\\
    P_\omega(\boldsymbol\rho^{k-1}|\boldsymbol\rho^k)=\mathcal{N}(\boldsymbol\rho^{k-1};\boldsymbol\mu_\omega(\boldsymbol\rho^k,k),\boldsymbol\Sigma_\omega(\boldsymbol\rho^k,k)),
\end{split}
\end{equation}
where $\omega$ represents the model parameters and $P(\omega_K)=\mathcal{N}(\omega_K;\textbf{0},\textbf{I})$. Therefore, the reverse diffusion process can be accomplished by estimating $Q(\boldsymbol\rho^{k-1}|\boldsymbol\rho^k,\boldsymbol\rho^0)$ using highly trained $P_\omega(\boldsymbol\rho^{k-1}|\boldsymbol\rho^k)$. In contract modeling, the environment encompasses various factors that influence the optimal contract design, which is defined as 
\begin{equation}
\begin{split}
    \boldsymbol{S}:=&\big\{L, \:M,\: N,\: U_{ref},\: (Q_{1,1},\cdots,Q_{M,N}), \\
    &(\theta_1,\cdots,\theta_m,\cdots,\theta_M), \:(\sigma_1,\cdots,\sigma_n,\cdots,\sigma_N)\big\}. 
\end{split}
\end{equation}
The diffusion model network is denoted as $\pi_\omega(\boldsymbol\rho|\boldsymbol{S})$, using weights $\omega$ to map environmental states to contract designs as the contract design policy. The objective of $\pi_\omega(\boldsymbol\rho|\boldsymbol{S})$ is to generate a deterministic contract design aimed at optimizing the expected cumulative reward across multiple time steps. This optimization process aligns with the representation of the reverse diffusion process, which is expressed as \cite{du2023ai}
\begin{equation}
    \begin{split}
        \pi_\omega(\boldsymbol\rho|\boldsymbol{S})=&P_\omega(\boldsymbol\rho^0,\boldsymbol\rho^1,\cdots,\boldsymbol\rho^K|\boldsymbol{S})\\
        =&\mathcal{N}(\boldsymbol\rho^K;\textbf{0},\textbf{I})\prod_{k=1}^KP_\omega(\boldsymbol\rho^{k-1}|\boldsymbol\rho^k,\boldsymbol{S}),
    \end{split}
\end{equation}
where
\begin{equation}
\begin{split}
    P_\omega(\boldsymbol\rho^{k-1}|\boldsymbol\rho^k,\boldsymbol{S})&=\mathcal{N}\big(\boldsymbol\rho^k;\boldsymbol\mu_\omega(\boldsymbol\rho^k,\boldsymbol{S},k),\boldsymbol\Sigma_\omega(\boldsymbol\rho^k,\boldsymbol{S},k)\big),\\
    \boldsymbol\mu_\omega(\boldsymbol\rho^k,\boldsymbol{S},k)&=\frac{1}{\sqrt{\lambda_k}}\bigg(\boldsymbol\rho^k-\frac{\iota_t}{\sqrt{1-\hat{\lambda}_k}}\boldsymbol\varepsilon_\omega(\boldsymbol\rho^k,\boldsymbol{S},k)\bigg),\\
    \boldsymbol\Sigma_\omega(\boldsymbol\rho^k,\boldsymbol{S},k)&=\iota_t\textbf{I},
\end{split}
\end{equation}
where $\boldsymbol\varepsilon_\omega$ denotes the contract generation network, $\lambda_k:=1-\iota_k$ and $\hat{\lambda}_k:=\prod_{i=0}^k\iota_i$. Analyzing similar to Eq. (\ref{sample}), we can obtain
\begin{equation}\label{contract}
    \boldsymbol\rho^{k-1}|\boldsymbol\rho^k=\frac{\boldsymbol\rho^k}{\sqrt{\lambda_k}}-\frac{\iota_k}{\sqrt{\lambda_k(1-\hat{\lambda}_k)}}\boldsymbol\varepsilon_\omega(\boldsymbol\rho^k,\boldsymbol{S},k)+\sqrt{\iota_k}\boldsymbol\varepsilon.
\end{equation}
Subsequently, the contract quality network $Q_\upsilon$ is introduced, serving to map an environment-contract pair $(\boldsymbol{S},\boldsymbol\rho)$. Here, $Q_\upsilon$ denotes the anticipated cumulative reward when an agent selects a contract design policy from the current state and subsequently adheres to it. Consequently, the optimal contract design policy can be attained by maximizing entropy \cite{wen2024diffusion}
\begin{equation}\label{loss}
    \pi=\arg\max_{\pi_\omega}\sum_{t=0}^{T}\Bbb E\big[\gamma^t\big(r(\boldsymbol{S}_t,\boldsymbol\rho_t)+\varpi E(\pi_\omega(\boldsymbol{S}_t))\big)\big],
\end{equation}
where $\gamma$ denotes the discount factor, and $\varpi$ represents the hyperparameter of the temperature coefficient, which is used to adjust the emphasis on entropy. $E(\pi_\omega(\boldsymbol{S}_t))$ is the entropy value of $\pi_\omega(\boldsymbol{S}_t)$, which is expressed as
\begin{equation}
    E(\pi_\omega(\boldsymbol{S}_t))=-\pi_\omega(\boldsymbol{S}_t)\log\pi_\omega(\boldsymbol{S}_t).
\end{equation}
$r(\boldsymbol{S}_t,\boldsymbol\rho_t)$ represents the immediate reward when executing action $\boldsymbol\rho_t$ in state $\boldsymbol{S}_t$. Based on the IR and IC constraints, the reward is designed as
\begin{equation}
\begin{split}
    r(\boldsymbol{S}_t,\boldsymbol\rho_t)&=U^{PT}+\sum_{m=1}^M\sum_{n=1}^NV_{m,n}^{m,n} \\
    &+\sum_{m=1}^M\sum_{n=1}^N\sum_{i=1}^M\sum_{j=1}^N(V_{m,n}^{m,n}-V_{m,n}^{i,j}), i\neq m, j\neq n.
\end{split}
\end{equation}

Moreover, the contract quality network $Q_\upsilon$ is trained conventionally by minimizing the Bellman operator using the double Q-learning technique \cite{hasselt2010double}. This involves constructing $Q_{v_1}$ and $Q_{v_2}$ networks, along with their corresponding target networks $Q_{v_1 '}$, $Q_{v_2 '}$, and $\pi_{\omega '}$. The optimization of $v_j, j=\{1,2\}$ is achieved by minimizing the objective function \cite{wen2024diffusion}
\begin{equation}\label{object}
\begin{split}
    \Bbb E_{(\boldsymbol{S}_t,\boldsymbol\rho_{t},\boldsymbol{S}_{t+1},r_t)\sim\mathcal{H}_t}\bigg [\sum_{j=1,2}(r(\boldsymbol{S}_t,\boldsymbol\rho_t)-Q_{v_j}(\boldsymbol{S}_t,\boldsymbol\rho_t)\\
    +\gamma^t(1-d_{t+1})\pi_{\omega'}(\boldsymbol{S}_{t+1})Q_{v '_j}'(\boldsymbol{S}_{t+1})\big)^2\bigg],
\end{split}
\end{equation}
where $Q_{v '_j}'(\boldsymbol{S}_{t+1})=\min\{Q_{v_1 '}(\boldsymbol{S}_{t+1}),Q_{v_2 '}(\boldsymbol{S}_{t+1})\}$. $\mathcal{H}_t$ is a randomly sampled mini-batch of transitions retrieved from the replay buffer $\mathcal{D}$ during
neural network training step $t$, and $d_{t+1}$ represents the termination flag, which is a $0-1$ variable. 

The contract design algorithm utilizes denoising technology to produce the optimal contract design. Subsequently, exploration noise is introduced to the contract design and implemented to accumulate exploration experience \cite{du2023ai}. The detail of the GDM-based optimal contract design algorithm is shown in \textbf{Algorithm \ref{Contract_design}} and the illustration is shown in Fig. \ref{GDM}.

\begin{algorithm}
\caption{GDM-based Optimal Contract Design}\label{Contract_design}
\begin{algorithmic}[1] 
\State \textit{\textbf{Phase 1 - Training}}
\State Number of iterations $K$ to add noise, mini-batch size $H$, discount factor $\gamma$, exploration noise $\boldsymbol{\epsilon}$, soft target update parameter $\tau$.
\State \textit{$\#$ Initialize Parameters}
\State Initialize contract generation network $\boldsymbol{\varepsilon}_\omega$, contract quality network $Q_\upsilon$, target contract generation network $\boldsymbol{\varepsilon}'_{\omega'}$, and target contract quality network $Q'_{v'}$ with weights $\omega$, $v$, $\omega'$, and $v'$, respectively. \\
Initialize replay buffer $\mathcal{D}$.
\State \textit{$\#$ Learn contract design network}
\For {\rm{episode} $=1$ \textbf{to} max episode $Z$}
\For {\rm{step} $t=1$ \textbf{to} max step $T$}
\State Input the current environment $\boldsymbol{S}_t$.
\State Set $\boldsymbol\rho^K_t$ as Gaussian noise.
\State Generate contract design $\boldsymbol\rho^0_t$ by denoising $\boldsymbol\rho^K_t$ based on Eq. (\ref{contract}).
\State Add the exploration noise $\boldsymbol{\epsilon}$ to $\boldsymbol\rho^0_t$.
\State Compute reward $r_t$, i.e., the utility of user based on Eq. (\ref{U}), by executing contract design $\boldsymbol\rho^0_t$.
\State Store the record ($\boldsymbol{S}_t,\boldsymbol\rho^0_t,r_t, \boldsymbol{S}_{t+1}$) in replay buffer $\mathcal{D}$.
\State Sample a random mini-batch $\mathcal{H}$ of $H$ records ($\boldsymbol{S}_i,\boldsymbol\rho^0_i,r_i$) from replay buffer $\mathcal{D}$.
\State Update the contract quality network $Q_\upsilon$ by minimizing the object function based on Eq. (\ref{object}).
\State Update the contract generation network $\boldsymbol{\varepsilon}_\omega$ by computing the policy gradient based on Eq. (\ref{loss}).
\State Update target contract quality network $Q'_{v'}$ by updating the parameter $v'\leftarrow\tau v+(1-\tau)v'$.
\State Update target contract generation network $\boldsymbol{\varepsilon}'_{\omega'}$ by updating the parameter $\omega'\leftarrow\tau\omega+(1-\tau)\omega'$.
\EndFor
\EndFor
\State \textit{\textbf{Phase 2 - Inference}}
\State Input the environment $\boldsymbol{S}$.
\State \textit{$\#$ Generate optimal contract items}
\State Generate contract design $\boldsymbol\rho^0$ by denoising $\boldsymbol\rho^K$ based on Eq. (\ref{contract}).\\
\Return The optimal contract design $\boldsymbol\rho^0$.
\end{algorithmic}
\end{algorithm}

\textbf{Algorithm \ref{Contract_design}} employs a contract design algorithm based on GDMs, which iteratively learns and gathers experience from the environment through exploration \cite{du2023ai}. During the training phase, the contract design network undergoes iterative training processes to enhance its capabilities. Then, during the inference phase, the trained network applies its learned knowledge to generate optimal contract items based on the current environment. Leveraging the characteristics of the diffusion model, the algorithm dynamically adapts its output while seeking the optimal solution. This iterative process enhances the algorithm's robustness and efficiency compared to neural network models that provide direct output solutions.
We denote the weight counts of the contract generation network and the contract quality network as $\varkappa$ and $\varrho$, respectively. The initialization complexity is $\mathcal{O}(2\varkappa + 2\varrho)$. The complexity for action generation is $\mathcal{O}(K\varkappa)$ per step, considering $K$ denoising steps. The storage complexity of the replay buffer operation is $\mathcal{O}(1)$ and the complexity of mini-batch sampling is $\mathcal{O}(H)$. Updating the target contract generation network and the target contract quality network incurs complexities of $\mathcal{O}(\varkappa)$ and $\mathcal{O}(\varrho)$ per update \cite{du2023user}, respectively. Consequently, the computational complexity of the training phase is $\mathcal{O}(ZT(K\varkappa + \varrho))$, and for the inference phase, it is $\mathcal{O}(\varkappa)$ \cite{du2023user}.

\section{Numerical Results}\label{Results}
In this section, we provide numerical results to empirically demonstrate the effectiveness of the proposed approach. Similar to \cite{10416899, 9838736}, the key parameters of the experiment are delineated in Table \ref{exp}. We assume that there are $5$ RSUs in total, categorized into $2$ types based on computation resources and $2$ types based on bandwidth resources, resulting in $4$ distinct types of RSUs, i.e., $L=5,\:M=2,\:N=2$. $\theta_1$ and $\theta_2$ are randomly sampled in the range of $[10, 100]$ and $[100, 200]$, respectively. Similarly, $10\le \sigma_1\le 100\le \sigma_2\le 200$.

\begin{table}[ht]\label{parameter}
	\renewcommand{\arraystretch}{1.2}
        \captionsetup{font = small}
	\caption{ Key Parameters in the Simulation. }\label{table} \centering 
	\begin{tabular}{m{5.7cm}<{\raggedright}|m{1.7cm}<{\centering}}	 	
		\hline	
            \textbf{Parameters} & \textbf{Values}\\	
		\hline
            Resolution of HMD devices of users in AVs $D$ & $2160 \times 1200$ \\
            \hline
            Framerate of HMD devices of users in AVs $v$ & $90$ \\
            \hline
            Spectrum efficiency of HMD devices of users in AVs $S$ & $[1,3]$ \\
            \hline
            Transmission power between the type-$(\theta_m,\sigma_n)$ RSUs and AVs $p_{m,n}$ & $[20, 25]\rm{dBm}$ \\ 
            \hline
            Channel gain between the type-$(\theta_m,\sigma_n)$ RSUs and AVs $g_{m,n}$ & $[-25,-22]\rm{dB}$ \\ 
            \hline
            Noisy spectral density between the type-$(\theta_m,\sigma_n)$ RSUs and AVs $N_0$ & $-95\rm{dBm}$\\
            \hline
		Learning rate of the contract generation network &  $2\times10^{-7}$\\	
		\hline
		Learning rate of the contract quality network  &  $2\times10^{-7}$  \\	
		\hline
            Maximum capacity of the replay buffer $\mathcal{D}$ &  $ 10^6$\\
		\hline		
            Number of iterations to add noise $K$ &  $3$\\
		\hline
  		Mini-batch size $H$ &  $512$\\	
		\hline	
  		Discount factor $\gamma$&  $1$\\
		\hline
  		Exploration noise $\boldsymbol\epsilon$  &  $0.01$  \\	
		\hline		
  		Soft target update parameter $\tau$   & $0.005$  \\
		\hline
	\end{tabular}\label{exp}
\end{table}



\begin{figure}[t]
\centering
\includegraphics[width=0.5\textwidth]{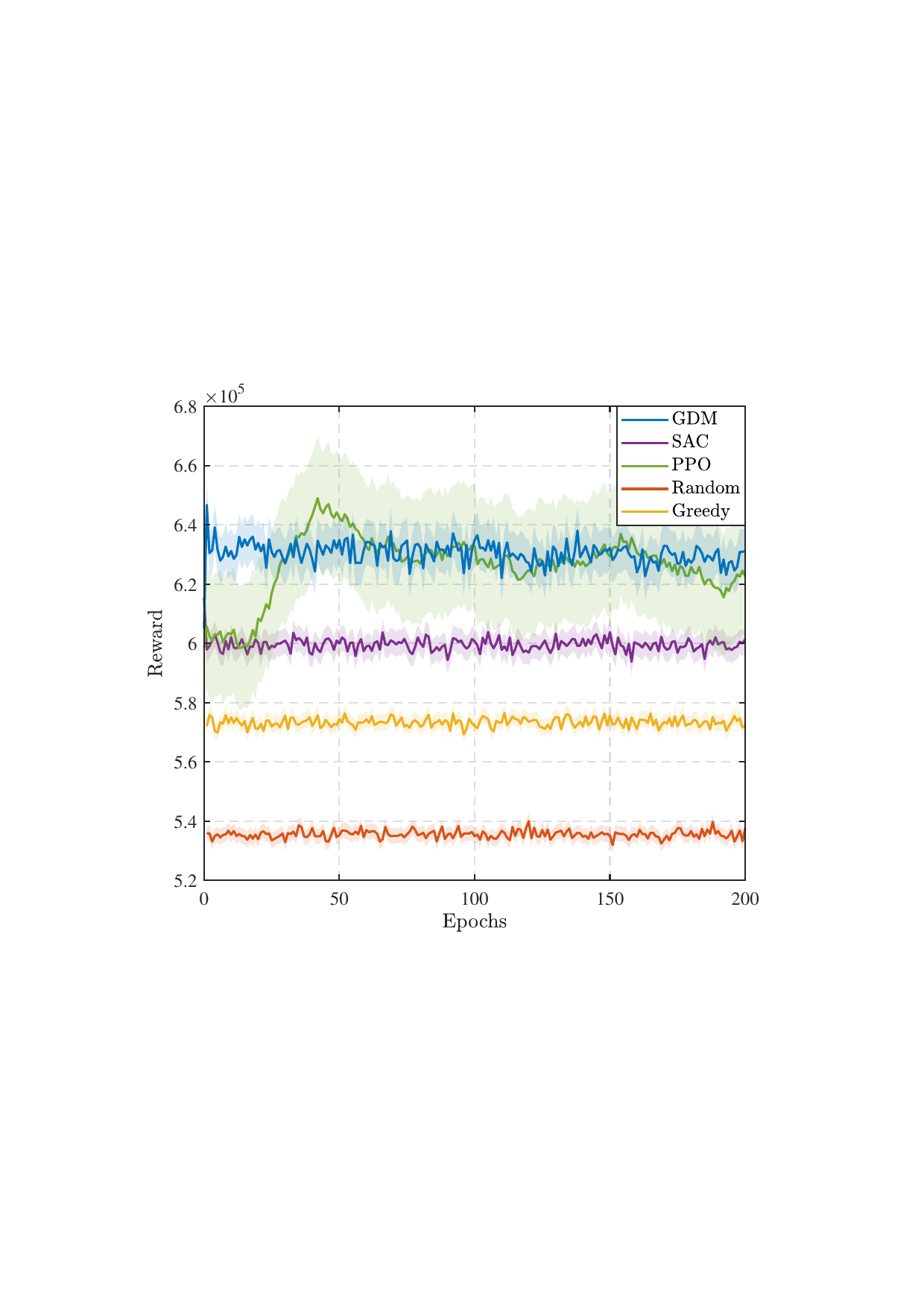}
\captionsetup{font=footnotesize} 
\caption{Reward comparison of our proposed GDM-based optimal contract design algorithm with other algorithms under PT, i.e., SAC, PPO, greedy, and random algorithms, with reference point $U_{ref}=10$ and loss aversion parameter $\kappa=0.5$.}
\label{performance}
\end{figure}

First, we demonstrate the convergence and superior performance of the proposed GDM algorithm. Figure \ref{performance} illustrates the performance improvements of various algorithms as the number of epochs increases. Notably, Figure \ref{performance} reveals that the GDM algorithm significantly outperforms both random and greedy algorithms. Furthermore,  Fig. \ref{performance} shows that the GDM algorithm consistently achieves the highest reward values, underscoring its superiority over traditional DRL algorithms, e.g., Proximal Policy Optimization (PPO) and Soft Actor Critic (SAC), under the same parameters settings. Although the reward value under the PPO algorithm is comparable to that of GDM, the GDM algorithm generally produces higher and more stable reward values. This result indicates the AV can obtain more utility by designing optimal contracts through the GDM algorithm, further emphasizing the effectiveness and robustness of the GDM algorithm. The impressive performance of GDM can be attributed to two key factors \cite{du2023deep}. On the one hand, fine-tuned policy adjustments during the diffusion process help mitigate the effects of randomness and noise. On the other hand, the exploratory nature of the diffusion process enhances the flexibility and robustness of the policy, reducing the likelihood of the model settling into suboptimal solutions. 

\begin{figure}[t]
\centering
\includegraphics[width=0.5\textwidth]{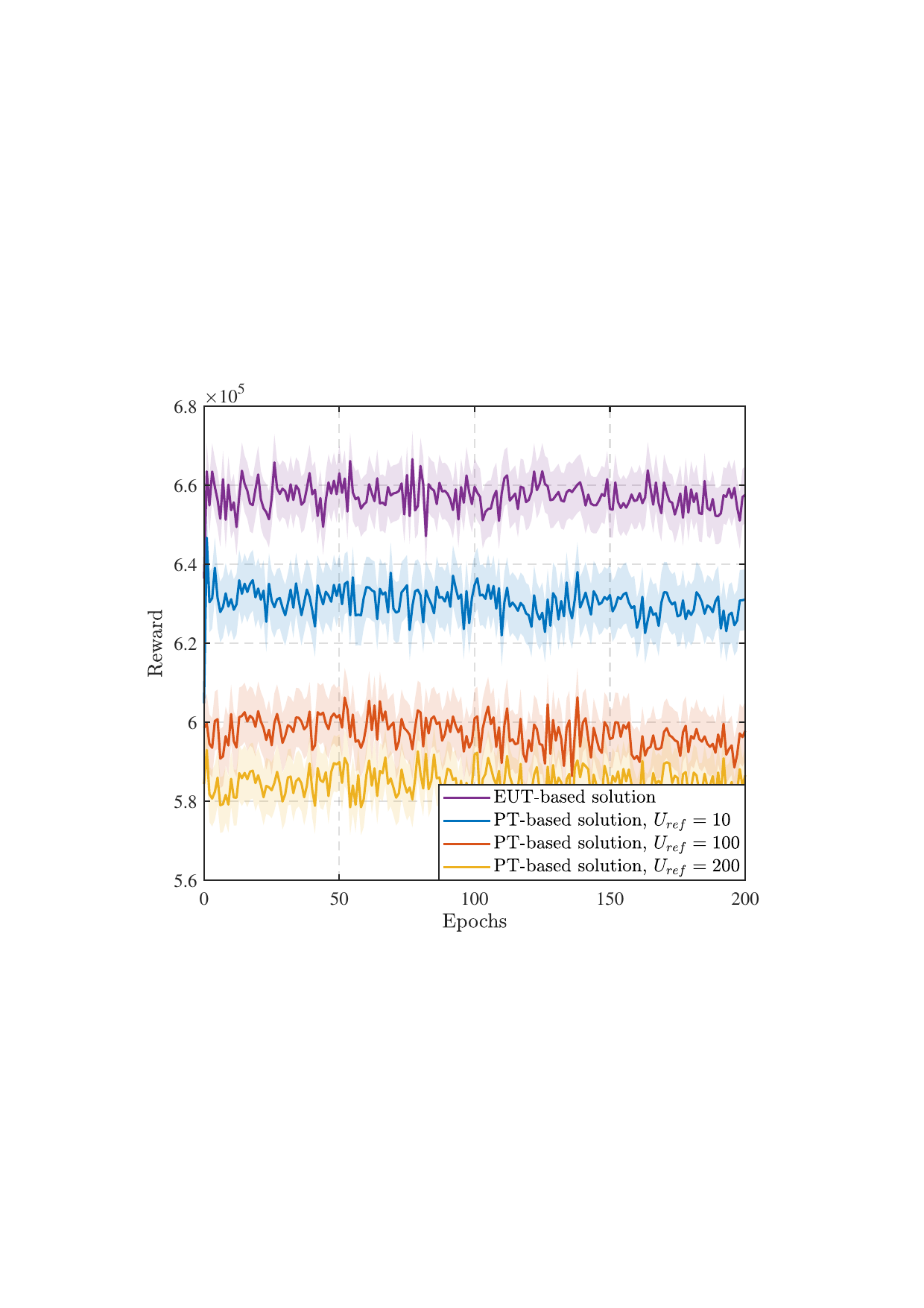}
\captionsetup{font=footnotesize}
\caption{Reward comparison of the proposed GDM-based optimal contract design algorithm under different reference points $U_{ref}$, with loss aversion parameter $\kappa = 0.5$.}
\label{U_ref}
\end{figure}
Figure \ref{U_ref} shows the trend of rewards over epochs for both PT-based and EUT-based solutions. The PT-based solution incorporates different reference points $U_{ref}$, accounting for PT in its contract design. Conversely, the EUT-based solution adheres to EUT and does not consider PT. From Fig. \ref{U_ref}, we can observe that the EUT-based solution consistently outperforms the PT-based solution in optimal contract design, regardless of the reference points $U_{ref}$. This is because the EUT-based solution does not consider the utility of AVs under uncertain and risky conditions. However, AVs may exhibit unreasonable behavior in this case. Thus the EUT-based solution is impractical. Furthermore, it is evident that as the reference point $U_{ref}$ increases, the corresponding reward diminishes, signifying a decrease in the utility of the AV. Therefore, it is shown that the smaller the reference point the AV is set, the greater the benefit can be obtained.

\begin{figure}[t]
\centering
\includegraphics[width=0.5\textwidth]{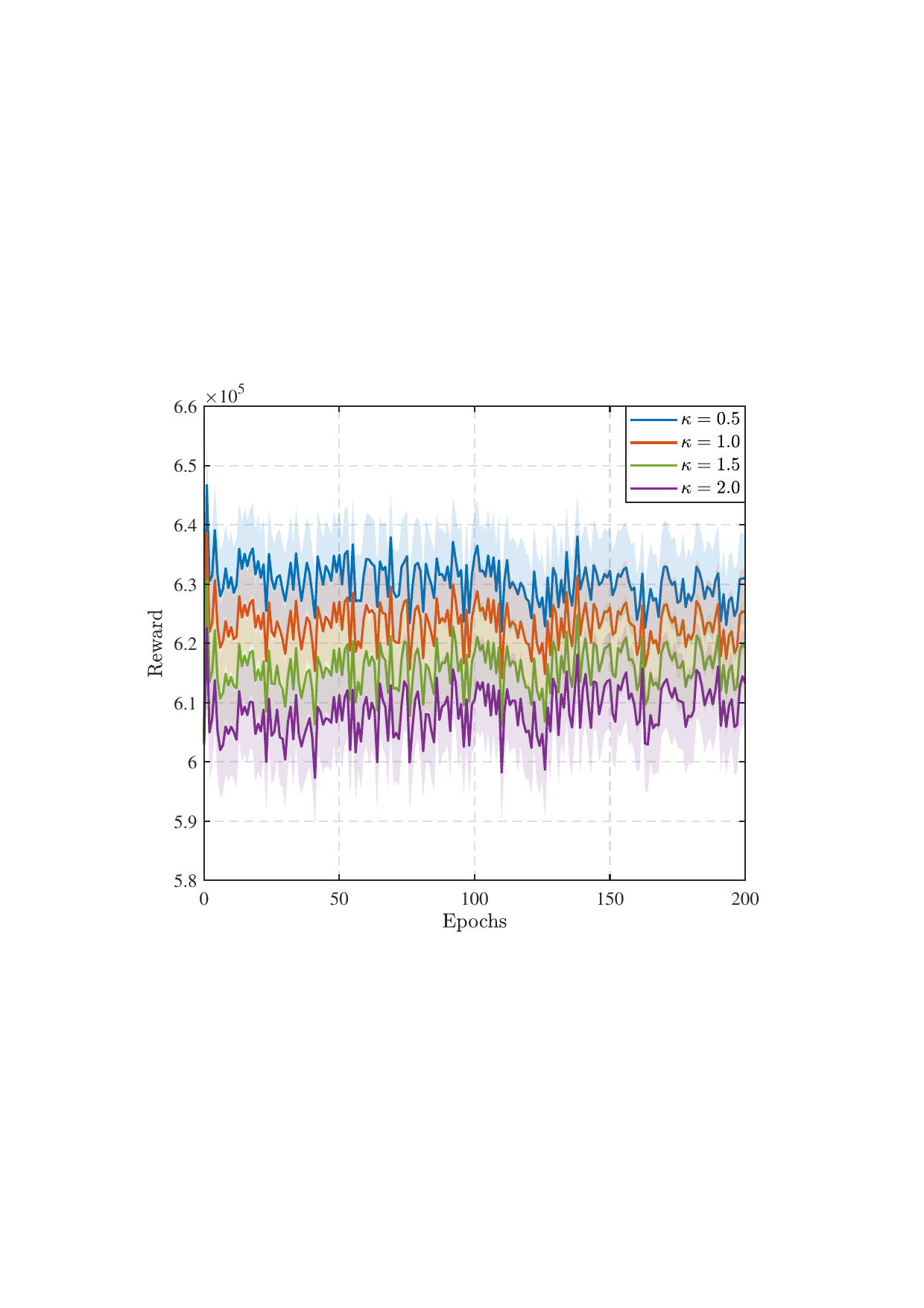}
\captionsetup{font=footnotesize}
\caption{Reward comparison of the proposed GDM-based optimal contract design algorithm under different loss aversion parameters $\kappa$, with reference point $U_{ref} = 10$.}
\label{aversion}
\end{figure}
Figure \ref{aversion} shows the rewards obtained under various loss aversion parameters $\kappa$, while keeping the reference point $U_{ref}$ constant. As shown in Fig. \ref{aversion}, we can observe that larger loss aversion results in smaller reward values. This occurs because an increase in the loss aversion parameter makes the AV more inclined to risk-averse behavior \cite{10254627}, indicating that AVs require RSUs with better resources to provide in-vehicle services and avoid utility loss. Consequently, AVs must offer higher rewards to RSUs with superior capabilities, thereby reducing the subjective utility of the AV and further lowering the overall rewards. Moreover, it is worth noting that the GDM algorithm consistently achieves optimal contracts with the same convergence speed, regardless of the variations in loss aversion parameters $\kappa$. This highlights the adaptability and robustness of the proposed scheme.

\begin{figure}[t]
\centering
\includegraphics[width=0.48\textwidth]{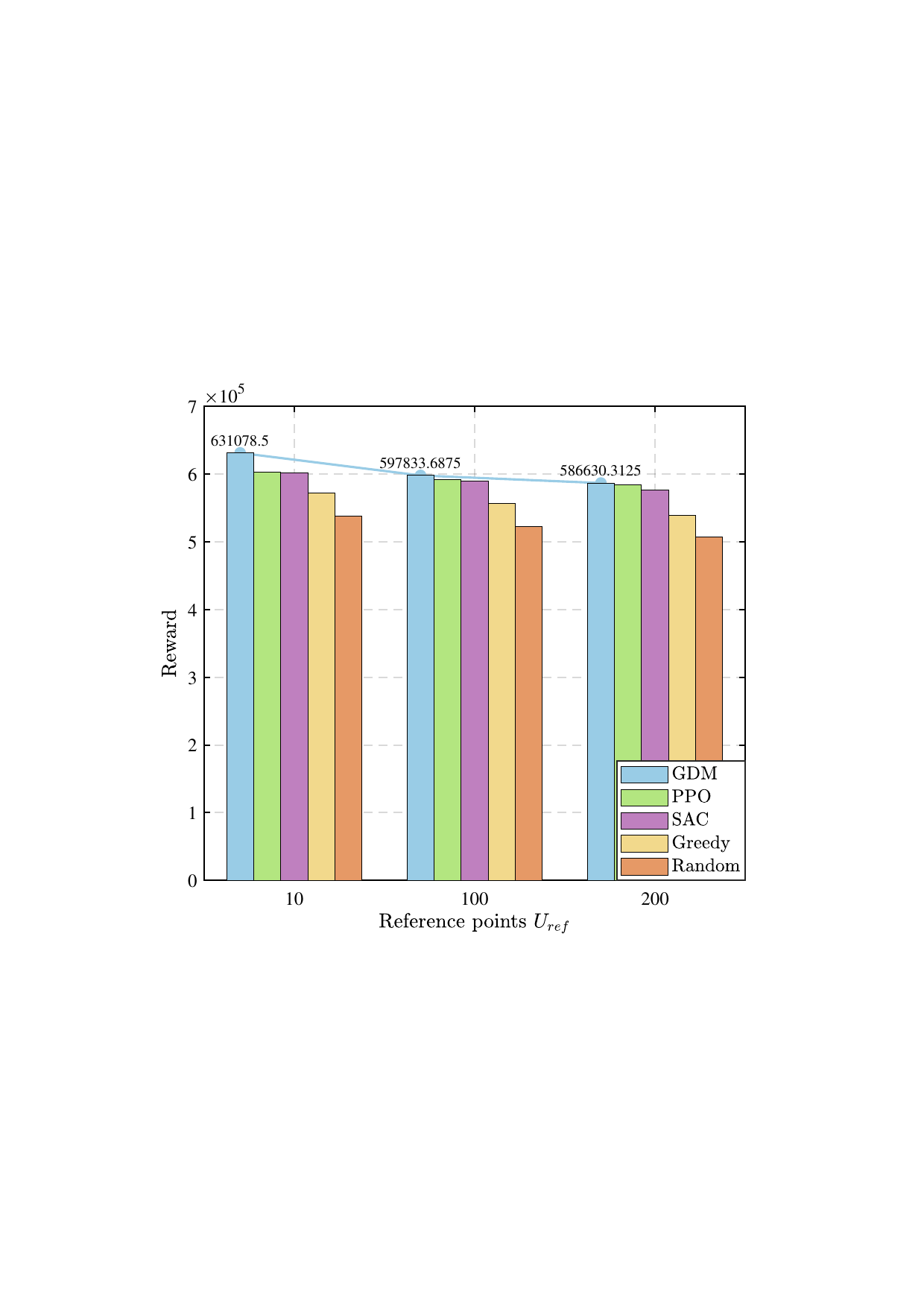}
\captionsetup{font=footnotesize}
\caption{Reward comparison of our proposed GDM-based optimal contract design algorithm with other algorithms under PT, under different reference points $U_{ref}$.}
\label{bar}
\end{figure}
Figure \ref{bar} illustrates the rewards achieved by different algorithms at various reference points $U_{ref}$. It is obvious from Fig. \ref{bar} that regardless of the algorithm used, e.g., GDM, SAC, PPO, greedy, and random algorithms, the reward will decrease as the reference point $U_{ref}$ set by the AV increases. This is because the larger the value of the reference point, the higher the requirements of the AV for RSUs, and therefore it will be more difficult to meet the needs of the AV, resulting in a decrease in the utility of the AV. In addition, it can be observed that the reward value under the GDM algorithm is always larger than the reward value of other algorithms, indicating that the GDM algorithm outperforms other algorithms, which further illustrates the superior performance of the proposed GDM algorithm.

\begin{figure}[t]
\centering
\includegraphics[width=0.5\textwidth]{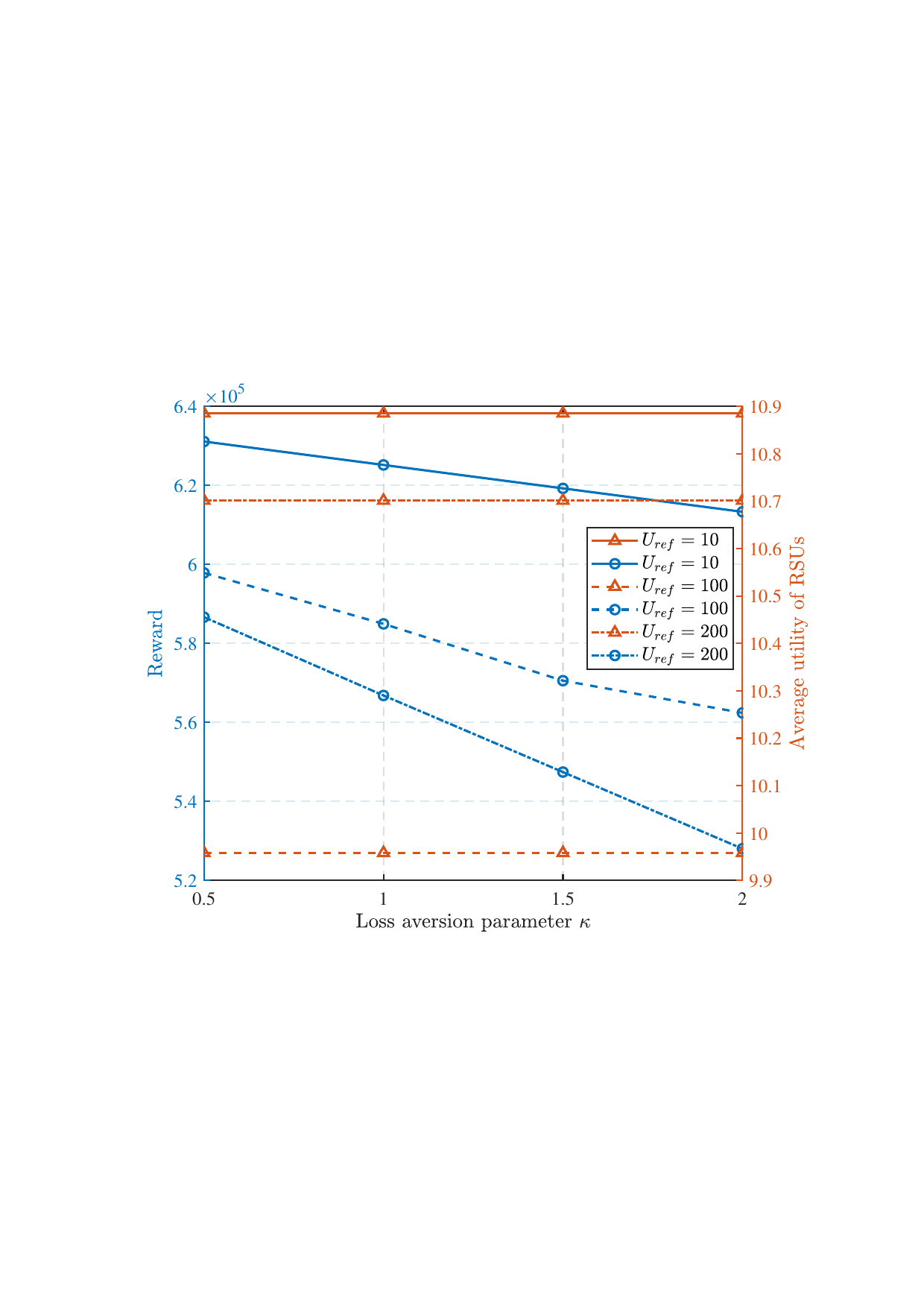}
\captionsetup{font=footnotesize}
\caption{Reward of the AV and average utility of RSUs comparison of our proposed GDM-based optimal contract design algorithm, under different reference points $U_{ref}$ and loss aversion parameters $\kappa$.}
\label{utility}
\end{figure}
Figure \ref{utility} depicts the reward of the AV and the average utility of RSUs in different preference parameters, i.e., reference point $U_{ref}$ and loss aversion parameter $\kappa$. From Fig. \ref{utility}, we observe that the reward decreases as the loss aversion parameter $\kappa$ increases, regardless of the reference point value $U_{ref}$. This finding corroborates the conclusion drawn in Fig. \ref{aversion}. In addition, it is observed that the reward decreases as the reference point $U_{ref}$ increases, irrespective of the loss aversion parameter $\kappa$, which supports the conclusion proposed in Fig. \ref{U_ref}. Furthermore, Fig. \ref{utility} demonstrates that the average utility of RSUs remains stable regardless of the loss aversion parameter $\kappa$, with the same reference point $U_{ref}$. This stability is attributed to the increase in objective utility for higher-type RSUs being offset by the decrease in objective utility for lower-type RSUs, resulting in a stable average utility for RSUs overall. 


\begin{figure}[t]
\centering
\subfigure[Contracts generated under different states.]{
\centering
\includegraphics[width=0.48\textwidth]{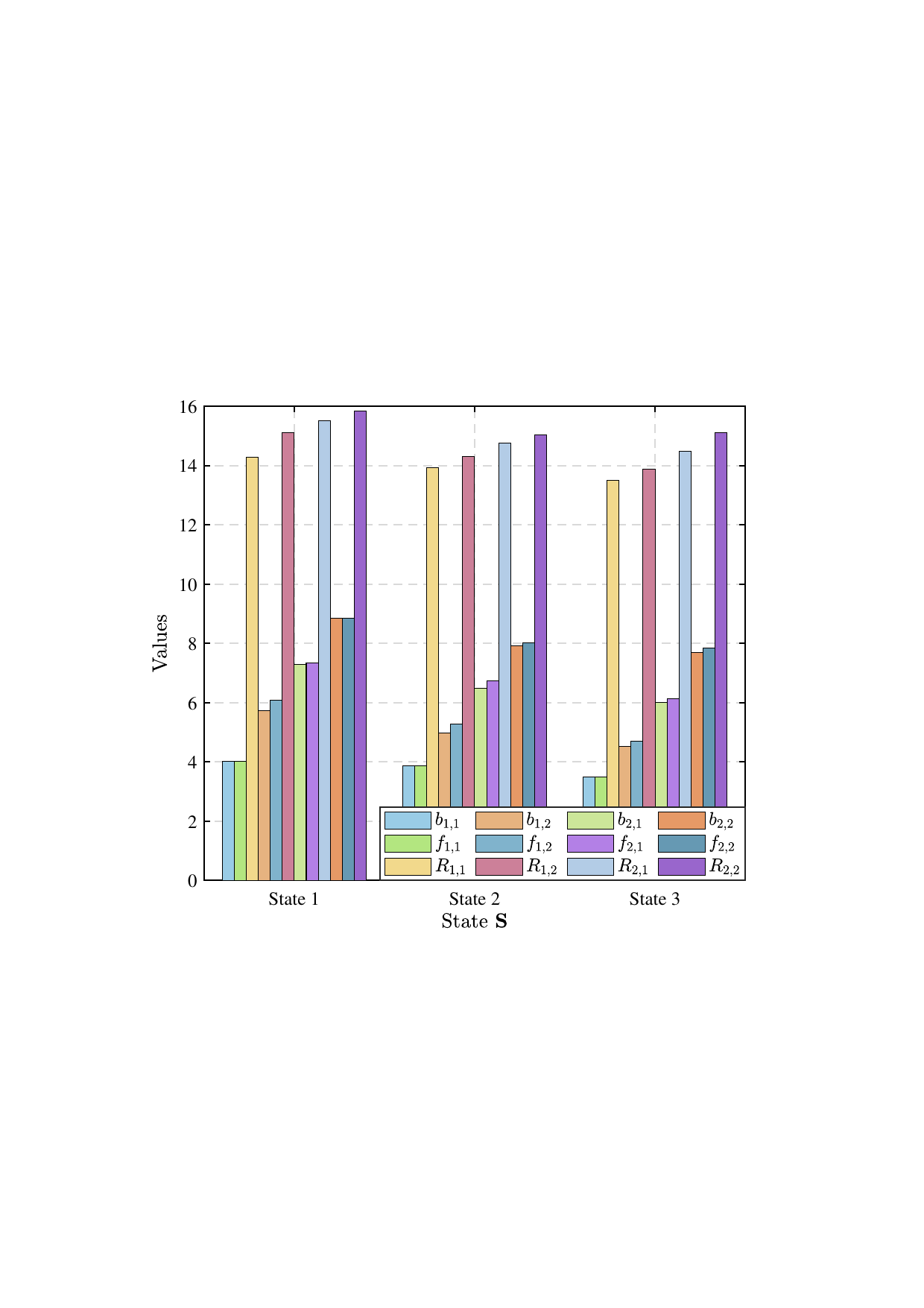}
\label{value}
}
\subfigure[Utilities of the AV and RSUs.]{
\centering
\includegraphics[width=0.5\textwidth]{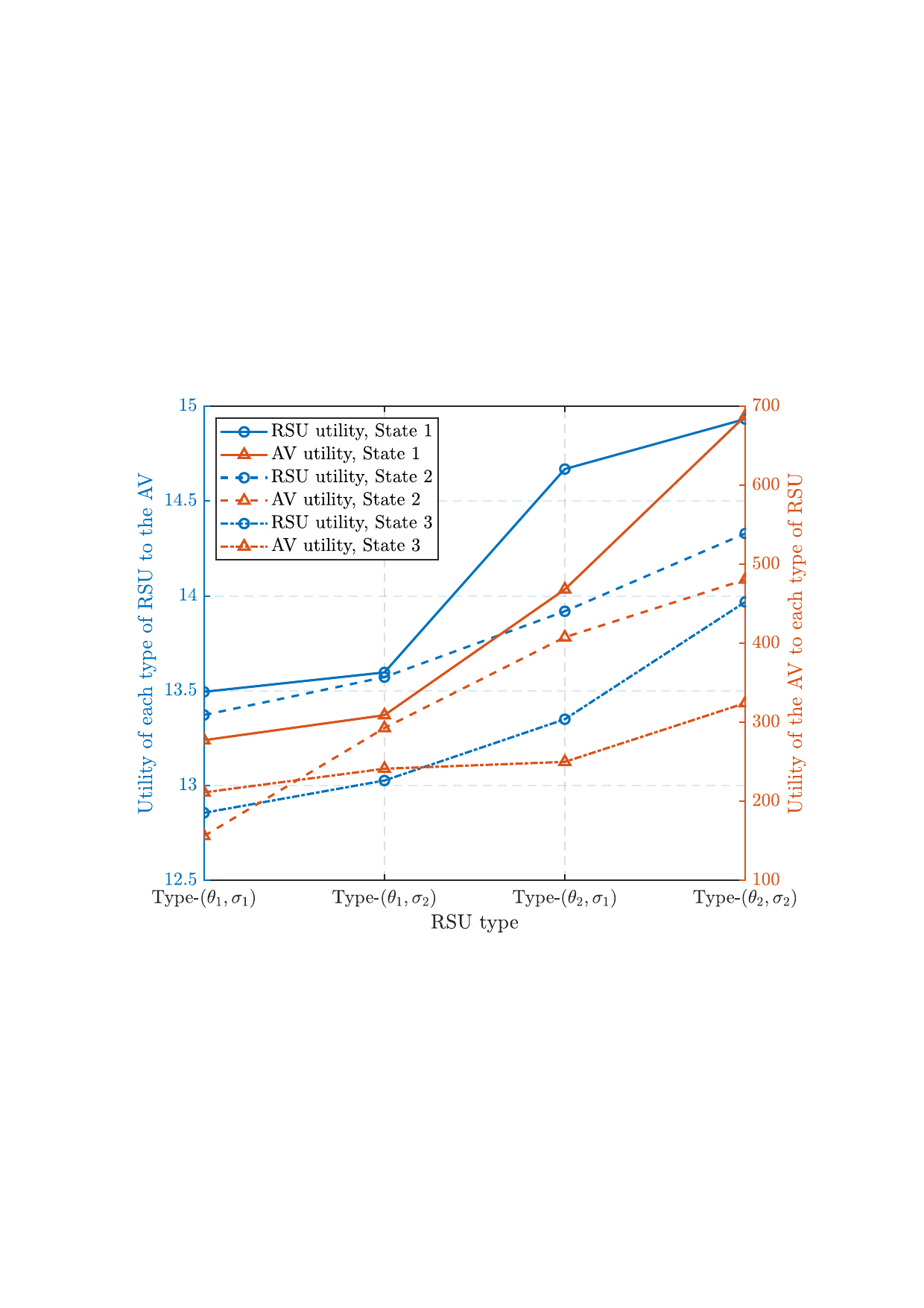}
\label{type}
}
\captionsetup{font=footnotesize}
\caption{Utilities of the AV and RSUs under different states $\boldsymbol{S}$.}
\label{contract_value}
\end{figure}
Figure \ref{contract_value} presents the utilities of the AV and RSUs across different states $\boldsymbol{S}$, showcasing the generation of various contracts. Figure \ref{value} illustrates the details of all types of contracts generated under different network states. We can observe that as the type of RSUs increases, the value of the contract also increases, which supports Lemma \ref{lemma1}. This further indicates that the AV will request more resources from RSUs with higher types. Moreover, from Fig. \ref{type}, it is evident that the utilities of both AV and RSUs exhibit a positive correlation with the increasing type of RSU. Specifically, we can also observe that $V_{2,2}^{2,2}\ge \max\{V_{1,2}^{1,2}, V_{2,1}^{2,1}, V_{1,1}^{1,1}\}$, i.e., the RSU with higher type will receive higher utility, confirming Lemma \ref{lemma2}. In summary, the simulation results demonstrate the feasibility of the contract model.

\section{Conclusion}\label{Conclusion}
In this paper, we introduced the concepts of ``embodied twins" and ``embodied AI twins" within the context of embodied AI. By integrating embodied AI with vehicular networks, a new paradigm called ``VEANETs" was proposed, where AVs play a crucial role. We tackled the challenge of efficient embodied AI twins migration in the networks by introducing a multi-dimensional contract model between AVs and RSUs. This model tackles the issue of information asymmetry, where AVs lack detailed knowledge about RSU resources. To account for the potential irrational behavior of AVs in risky and uncertain environments, we incorporated PT into the contract model. Specifically, PT is used to construct the utility function of AVs, allowing us to measure the subjective utility rather than the expected utility of AVs. Finally, we employed a GDM-based algorithm to determine the optimal contract design. Numerical results demonstrated the effectiveness and reliability of the proposed GDM-based contract design model under PT. For future work, we will focus on further refining the model to consider scenarios with multiple AVs and multiple RSUs.

\section*{Appendix A \\ Proof for Lemma \ref{lemma5}}\label{E}
There are $MN(MN-1)$ IC constraints defined in (\ref{IC}), which can be divided into $MN(MN-1)/2$ Downward Incentive Compatibility (DIC), shown as
\begin{equation}
    V_{m,n}^{m,n}\ge V_{m,n}^{i,j}, 1\le i\le M, 1\le j\le N, m>i,n>j,
\end{equation}
and $MN(MN-1)/2$ Upward Incentive Compatibility (UIC), shown as
\begin{equation}
    V_{m,n}^{m,n}\ge V_{m,n}^{i,j}, 1\le i\le M, 1\le j\le N, m<i,n<j.
\end{equation}

First, we prove the DIC can be reduced to LDIC. Based on the IC constraints, we can get $V_{m+1,n+1}^{m+1,n+1}\ge V_{m+1,n+1}^{m,n}$, which is the LDIC. 
Moreover, based on Lemma $\ref{lemma3}$, we can obtain $V_{m+1,n+1}^{m,n}\ge V_{m+1,n+1}^{m,n-1}$, $V_{m+1,n+1}^{m,n}\ge V_{m+1,n+1}^{m-1,n}$, and $V_{m+1,n+1}^{m,n}\ge V_{m+1,n+1}^{m-1,n-1}$. Considering the above analysis, we can get
\begin{equation}\label{vm}
    V_{m+1,n+1}^{m+1,n+1}\ge \max\{V_{m+1,n+1}^{m,n-1},V_{m+1,n+1}^{m-1,n},V_{m+1,n+1}^{m-1,n-1}\}.
\end{equation}
Therefore, we can know that the type-($\theta_{m+1},\sigma_{n+1}$) RSUs prefer to choose the contract item $\{b_{m+1,n+1},f_{m+1,n+1},R_{m+1,n+1}\}$ rather than contract item $\{b_{m,n-1},f_{m,n-1},R_{m,n-1}\}$, $\{b_{m-1,n},f_{m-1,n},R_{m-1,n}\}$ and $\{b_{m-1,n-1},f_{m-1,n-1},R_{m-1,n-1}\}$. It can be downward extended until type-($\theta_1,\sigma_1$) based on Eq. (\ref{vm}). Therefore, we can get
\begin{equation}\label{vm1}
    \begin{split}
        V_{m+1,n+1}^{m+1,n+1}\ge \max\{V_{m+1,n+1}^{m,n-1},V_{m+1,n+1}^{m-1,n},V_{m+1,n+1}^{m-1,n-1}\}\\
        \ge \cdots\ge \max\{V_{m+1,n+1}^{2,1},V_{m+1,n+1}^{1,2},V_{m+1,n+1}^{1,1}\}\\
        \ge\max\{V_{1,1}^{2,1},V_{1,1}^{1,2},V_{1,1}^{1,1}\}.
    \end{split}
\end{equation}
We can conclude that the DIC is upheld based on Lemma \ref{lemma3} and the LDIC. 
Additionally, $V_{n,m}^{m.n}\ge V_{n,m}^{m+1.n}$, i.e.,
\begin{equation}
    R_{m,n}-\frac{b_{m,n}^2}{\theta_m}-\frac{f_{m,n}^2}{\sigma_n}\ge R_{m,n+1}-\frac{b_{m,n+1}^2}{\theta_m}-\frac{f_{m,n+1}^2}{\sigma_n},
\end{equation}
i.e.,
\begin{equation}
    R_{m,n+1}-R_{m,n}-\frac{1}{\theta_m}\big(b_{m,n+1}^2-b_{m,n}^2\big)-\frac{1}{\sigma_n}\big(f_{m,n+1}^2-f_{m,n}^2\big)\le 0.
\end{equation}
Since $\theta_{m-1}<\theta_m$, $\sigma_{n-1}<\sigma_n$, $b_{m,n+1}-b_{m,n}>0$ and $f_{m,n+1}-f_{m,n}>0$, we can get
\begin{equation}
\begin{split}
    R_{m,n+1}-R_{m,n}-\frac{1}{\theta_{m-1}}\big(b_{m,n+1}^2-b_{m,n}^2\big)-\\
    \frac{1}{\sigma_{n-1}}\big(f_{m,n+1}^2-f_{m,n}^2\big)\le 0,
\end{split}
\end{equation}
which is equivalent to
\begin{equation}
    R_{m,n}-\frac{b_{m,n}^2}{\theta_{m-1}}-\frac{f_{m,n}^2}{\sigma_{n-1}}\ge R_{m,n+1}-\frac{b_{m,n+1}^2}{\theta_{m-1}}-\frac{f_{m,n+1}^2}{\sigma_{n-1}},
\end{equation}
i.e., $V_{m-1,n-1}^{m,n}\ge V_{m-1,n-1}^{m,n+1}$. Similarly, we can prove $V_{m-1,n-1}^{m,n}\ge V_{m-1,n-1}^{m+1,n}$ and $V_{m-1,n-1}^{m,n}\ge V_{m-1,n-1}^{m+1,n+1}$, i.e.,
\begin{equation}
    V_{m-1,n-1}^{m,n}\ge\max\{V_{m-1,n-1}^{m,n},V_{m-1,n-1}^{m+1,n},V_{m-1,n-1}^{m+1,n+1}\}.
\end{equation}
Since $V_{m-1,n-1}^{m-1,n-1}\ge V_{m-1,n-1}^{m,n}$, we can obtain
\begin{equation}
    V_{m-1,n-1}^{m-1,n-1}\ge\max\{V_{m-1,n-1}^{m,n+1},V_{m-1,n-1}^{m+1,n},V_{m-1,n-1}^{m+1,n+1}\}.
\end{equation}
Similarly to (\ref{vm1}), we can finally prove that if LUIC holds, then UIC holds.

\section*{Appendix B \\ Proof for Theorem \ref{theorem1}}\label{F}
Based on IC constraints, for $1\le m\le M$ and $  \:1\le n\le N$, we can get $V_{m-1,n}^{m-1,n}\ge V_{m-1,n}^{m-1,n-1}$, i.e.,
\begin{equation}
\begin{split}
     R_{m-1,n}&-\frac{b_{m-1,n}^2}{\theta_{m-1}}-\frac{f_{m-1,n}^2}{\sigma_n}\ge \\ 
     &R_{m-1,n-1}-\frac{b_{m-1,n-1}^2}{\theta_{m-1}}-\frac{f_{m-1,n-1}^2}{\sigma_n},
\end{split}
\end{equation}
which is equivalent to 
\begin{equation}
    \begin{split}
        &R_{m-1,n}-\frac{b_{m-1,n}^2}{\theta_m}-\frac{f_{m-1,n}^2}{\sigma_n}\ge \\
        &R_{m-1,n-1}-\frac{b_{m-1,n-1}^2}{\theta_{m-1}}-\frac{f_{m-1,n-1}^2}{\sigma_{n-1}}+\\
        &\bigg(\frac{1}{\theta_{m-1}}-\frac{1}{\theta_m}\bigg)\big(b_{m-1,n}^2-b_{m-1,n-1}^2\big)+\\
        &\bigg(\frac{1}{\sigma_{n-1}}-\frac{1}{\sigma_n}\bigg)f_{m-1,n-1}^2+\bigg(\frac{1}{\theta_{m-1}}-\frac{1}{\theta_m}\bigg)b_{m-1,n-1}^2,
    \end{split}
\end{equation}
that is
\begin{equation}\label{1}
    \begin{split}
        V_{m,n}^{m-1,n}&\ge V_{m-1,n-1}^{m-1,n-1}+\bigg(\frac{1}{\theta_{m-1}}-\frac{1}{\theta_m}\bigg)\big(b_{m-1,n}^2-\\
        &b_{m-1,n-1}^2\big)+\bigg(\frac{1}{\sigma_{n-1}}-\frac{1}{\sigma_n}\bigg)f_{m-1,n-1}^2+\\
        &\bigg(\frac{1}{\theta_{m-1}}-\frac{1}{\theta_m}\bigg)b_{m-1,n-1}^2.
    \end{split}
\end{equation}
Similarly,  $V_{m,n-1}^{m,n-1}\ge V_{m,n-1}^{m-1,n-1}$, i.e.,
\begin{equation}
    \begin{split}
     R_{m,n-1}&-\frac{b_{m,n-1}^2}{\theta_{m}}-\frac{f_{m,n-1}^2}{\sigma_{n-1}}\ge \\ 
     &R_{m-1,n-1}-\frac{b_{m-1,n-1}^2}{\theta_m}-\frac{f_{m-1,n-1}^2}{\sigma_{n-1}},
     \end{split}
\end{equation}
which is equivalent to
\begin{equation}\label{2}
    \begin{split}
        V_{m,n}^{m,n-1}&\ge V_{m-1,n-1}^{m-1,n-1}+\bigg(\frac{1}{\theta_{m-1}}-\frac{1}{\theta_m}\bigg)b_{m-1,n}^2+\\
        &\bigg(\frac{1}{\sigma_{n-1}}-\frac{1}{\sigma_n}\bigg)f_{m-1,n-1}^2+\bigg(\frac{1}{\sigma_{n-1}}-\frac{1}{\sigma_n}\bigg)\\
        &\big(f_{m,n-1}^2-f_{m-1,n-1}^2\big).
    \end{split}
\end{equation}
According to IC constraints, we can get
\begin{equation}\label{a}
    V_{m,n}^{m,n}\ge \max\{V_{m,n}^{m,n-1},V_{m,n}^{m-1,n},V_{m,n}^{m-1,n-1}\}.
\end{equation}
The AV will minimize the reward to optimize profit until the equal sign of Eq.(\ref{a}) is satisfied. Thus considering Eqs. (\ref{1}) and (\ref{2}), we have the recurrence formula as
\begin{equation}\label{a}
    \begin{split}
        V_{m,n}^{m,n}=&V_{m-1,n-1}^{m-1,n-1}+\bigg(\frac{1}{\theta_{m-1}}-\frac{1}{\theta_m}\bigg)b_{m-1,n-1}^2+\\
        &\bigg(\frac{1}{\sigma_{n-1}}-\frac{1}{\sigma_n}\bigg)f_{m-1,n-1}^2+\\
        &\max\bigg\{0,\bigg(\frac{1}{\theta_{m-1}}-\frac{1}{\theta_m}\bigg)\big(b_{m-1,n}^2-b_{m-1,n-1}^2\big),\\
        &\:\:\quad\bigg(\frac{1}{\sigma_{n-1}}-\frac{1}{\sigma_n}\bigg)\big(f_{m,n-1}^2-f_{m-1,n-1}^2\big)\bigg\}.
    \end{split}
\end{equation}
Through iterative calculation, we can get the expression of $V_{m-1,n-1}^{m-1,n-1}$ with respect to $V_{m-2,n-2}^{m-2,n-2}$, and the expression of $V_{m-2,n-2}^{m-2,n-2}$ with respect to $V_{m-3,n-3}^{m-3,n-3}$, and by analogy, Eq. (\ref{a}) can be formulated as
\begin{equation}\label{reduce2}
\begin{split}
    V_{m,n}^{m,n}&=V_{1,1}^{1,1}+\sum_{i=1}^{m-1}\sum_{j=1}^{n-1}\big(\Delta_ib_{i,j}^2+\Lambda_jf_{i,j}^2\big)+\sum_{i=1}^{m-1}\sum_{j=1}^{n-1}\\
    &\max\bigg\{0,\Delta_i\big(b_{i,j+1}^2-b_{i,j}^2\big), \Lambda_j\big(f_{i+1,j}^2-f_{i,j}^2\big)\bigg\},
\end{split}
\end{equation}
where $\Delta_i=\frac{1}{\theta_i}-\frac{1}{\theta_{i+1}}>0$, and $\Lambda_j=\frac{1}{\sigma_j}-\frac{1}{\sigma_{j+1}}>0$. For the reduced IR constraint $V_{1,1}^{1,1}>0$ derived in Lemma \ref{lemma4}, the AV will reduce the reward as much as possible to maximize its objective function until $V_{1,1}^{1,1}=0$ \cite{ho2020denoising}. Thus, Eq. (\ref{reduce2}) can be formulated as Eq. (\ref{reduce1}).

\bibliographystyle{IEEEtran}
\bibliography{ref}

\end{document}